\def\final{0}
\author{Mark Bun \thanks{Department of Computer Science, Boston University \textit{mbun@bu.edu}} \and Roi Livni \thanks{Department of Electrical Engineering, Tel Aviv University \textit{rlivni@tauex.tau.ac.il}}\and Shay Moran \thanks{Google AI Princeton. Part of this work was done while the author was at the Computer Science Department in Princeton University. \textit{shaymoran1@gmail.com}}}
\newcommand{\mynote}[1]{\marginpar{\tiny \sf #1}}
\newcommand{\mynote}[1]{}
\newcommand{\mnote}[1]{}
\newcommand{\rnote}[1]{}
\newcommand{\snote}[1]{}
\newcommand{\correction}[2]{\color{black}#1\color{black}}
\newcommand{\rsuggestion}[1]{}
\newcommand{\cA}{\mathcal{A}}
\newcommand{\cD}{\mathcal{D}}
\newcommand{\cH}{\mathcal{H}}
\newcommand{\poly}{\mathrm{poly}}
\newcommand{\eps}{\varepsilon}
\newcommand{\argmax}{\operatorname{argmax}}
\DeclareMathOperator*{\Expectation}{\mathbb{E}}
\newtheorem{theorem}{Theorem}
\newtheorem{lemma}[theorem]{Lemma}
\newtheorem{corollary}[theorem]{Corollary}
\newtheorem{proposition}[theorem]{Proposition}
\newtheorem{observation}[theorem]{Observation}
\theoremstyle{definition}
\newtheorem{definition}[theorem]{Definition}
\newcommand{\loss}{\operatorname{loss}}
\newcommand{\freq}{\operatorname{freq}}
\newcommand{\Ldim}{\operatorname{Ldim}}
\newcommand{\soa}{\mathsf{SOA}}
\newcommand{\Hist}{\mathsf{Hist}}
\newcommand{\Gen}{\mathsf{GenericLearner}}
\title{An Equivalence Between Private Classification \\ and Online Prediction}
\begin{document}
\maketitle

\abstract{
We prove that every concept class with finite Littlestone dimension
can be learned by an (approximate) differentially-private algorithm.
This answers an open question of Alon et al.~(STOC 2019) who proved the converse statement 
(this question was also asked by  Neel et al.~(FOCS 2019)).
Together these two results yield an equivalence between online learnability and private PAC learnability.


We introduce a new notion of algorithmic stability called ``global stability'' which is essential to our proof and may be of independent interest.
We also discuss an application of our results to boosting the privacy and accuracy parameters of differentially-private learners.
}

\section{Introduction}

This paper continues the study of the close relationship between differentially-private learning and online learning.

\paragraph{Differentially-Private Learning.}
Statistical analyses and computer algorithms play significant roles in the decisions which shape modern society.
The collection and analysis of individuals' data drives computer programs which determine many critical outcomes, including the allocation of community resources, decisions to give loans, and school admissions.	

While data-driven and automated approaches have obvious benefits in terms of efficiency,
	they also raise the possibility of unintended negative impacts, especially against marginalized groups. This possibility
 	highlights the need for {\it responsible} algorithms that obey relevant ethical requirements
	(see e.g.~\cite{Oneil2016weapons}).

{\it Differential Privacy} (DP)~\cite{DworkMNS06} plays a key role in this context.
	Its initial (and primary) purpose was to provide a formal framework for ensuring individuals' privacy in the statistical analysis of large datasets.
	But it has also found use in addressing other ethical issues such as {\it algorithmic fairness} (see, e.g.~\cite{DworkHPRZ12, cummings19fairness}).

Many tasks which involve sensitive data arise in machine learning (e.g.\ in medical applications and in social networks).
	Consequently, a large body of practical and theoretical work has been dedicated to understand which learning tasks
	can be performed by DP learning algorithms. The simplest and most extensively studied model of learning is the private PAC model~\cite{Valiant84, KasiviswanathanLNRS11}, which captures binary classification tasks under differential privacy.
	A partial list of works on this topic includes~\cite{KasiviswanathanLNRS11,BeimelBKN14,BunNSV15,FeldmanX15,BeimelNS16,BunDRS18,Beimel19Pure,AlonLMM19,kaplan2019privately}.

In this manuscript we make progress towards characterizing what tasks are DP PAC-learnable 
	by demonstrating a qualitative equivalence with online-learnable tasks.
	
\paragraph{Online Learning.}

Online learning is a well-studied branch of machine learning which addresses algorithms making real-time predictions on sequentially arriving data.
	Such tasks arise in contexts including recommendation systems and advertisement placement.
	The literature on this subject is vast and includes several books, e.g.~\cite{cesabianchi06prediction,Shalev-Shwartz12book,Hazan16oco}.

{\it Online Prediction}, or {\it Prediction with Expert Advice} is a basic setting within online learning.
	Let~$\mathcal{H} = \{h:X\to \{\pm1\} \}$ be a class of predictors (also called experts) over a domain $X$.
	Consider an algorithm which observes examples $(x_1,y_1)\ldots (x_T,y_T)\in X\times\{\pm 1\}$ in a sequential manner.
	More specifically, in each time step $t$, the algorithm first observes the instance $x_t$, then predicts a label $\hat{y}_t\in\{\pm 1\}$, 
	and finally learns whether its prediction was correct.
	The goal is to minimize the {\it regret}, namely the number of mistakes compared to the best expert in $\mathcal{H}$:
	\[ \sum_{t=1}^T 1[y_t\neq \hat{y}_t] - \min_{h^*\in\mathcal{H}} \sum_{t=1}^T 1[y_t\neq h^*(x_t)].\]
	In this context, a class $\mathcal H$ is said to be online learnable if for every $T$, 
	there is an algorithm that achieves sublinear regret $o(T)$ against any sequence of $T$ examples.
	The {\it Littlestone dimension} is a combinatorial parameter associated to the class $\cH$ 
	which characterizes its online learnability~\cite{Littlestone87online,Bendavid09agnostic}:
	$\mathcal H$~is online learnable if and only if it has a finite Littlestone dimension~$d<\infty$.
    Moreover, the best possible regret $R(T)$ for online learning of $\cH$ satisfies 
	\[\Omega (\sqrt{dT}) \leq R(T) \leq O(\sqrt{dT\log T}).\]
	Furthermore, if it is known that if one of the experts never errs (i.e.\ in the realizable mistake-bound model), 
	then the optimal regret is exactly $d$.

\paragraph{Stability.}

While at a first glance it may seem that online learning and differentially-private learning have little to do with one another, 
	a line of recent works has revealed a tight connection between the two~\cite{Agarwal17dponline,Abernathy17onlilnedp,AlonLMM19,bousquet2019passing,NeelRW19,Joseph2019TheRO,Gonen19privateonline}.

At a high-level, this connection appears to boil down to the notion of stability, which plays a key role in both topics.
	On one hand, the definition of differential privacy is itself a form of stability;
	it requires robustness of the output distribution of an algorithm when its input undergoes small changes.
	On the other hand, stability also arises as a central motif in online learning paradigms
	such as {\it Follow the Perturbed Leader}~\cite{Kalai02geometricalgorithms,kalai05efficient} and {\it Follow the Regularized Leader}~\cite{abernethy08competing, Shalev07ftrl,Hazan16oco}.
	
In their monograph~\cite{DworkR14}, Dwork and Roth identified stability as a common factor of learning and differential privacy:
	{\it ``Differential privacy is enabled by stability and ensures stability\ldots
	we observe a tantalizing moral equivalence between learnability, differential privacy, and stability.''}
 	This insight has found formal manifestations in several works.
	For example, Abernethy et al.\ used DP inspired stability methodology to derive a unified framework
	for proving state of the art bounds in online learning~\cite{Abernathy17onlilnedp}.
	In the opposite direction, Agarwal and Singh showed that certain standard stabilization techniques
	in online learning imply differential privacy~\cite{Agarwal17dponline}. 
\vspace{1.5mm}

Stability plays a key role in this work as well. Our main result, 
	which shows that any class with a finite Littlestone dimension can be privately learned, 
	hinges on the following form of stability: for $\eta > 0$ and $n\in\mathbb{N}$, 
	a learning algorithm $\cA$ is {\it $(n,\eta)$-globally stable}\footnote{The word {\it global} highlights a difference with other forms of algorithmic stability.
	Indeed, previous forms of stability such as DP and {\it uniform hypothesis stability}~\cite{Bousquet02stability} are local in the sense that they require output robustness 
	subject to {\it local} changes in the input. However, the property required by global stability captures stability with respect to resampling the entire input.} with respect to a distribution $\cD$ over examples
	if there exists an hypothesis $h$ whose frequency as an output is at least $\eta$. Namely,
	\[\Pr_{S\sim \cD^n}[\cA(S) = h] \geq \eta.\]
	Our argument follows by showing that every $\cH$ can be learned by a globally-stable algorithm with parameters $\eta = \correction{\exp(\exp(-d))}{\exp(-d)}, n=\correction{\exp(\exp(d))}{\exp(d)}$,
	where $d$ is the Littlestone dimension of $\cH$.
	As a corollary, we get an equivalence between global stability and differential privacy (which can be viewed as a form of local stability).
	That is, the existence of a globally stable learner for~$\cH$ is equivalent 
	to the existence of a differentially-private learner for it (and both are equivalent to having a finite Littlestone dimension). 

\paragraph{Littlestone Classes.}
It is natural to ask which classes have finite Littlestone dimension.
    First, note that every finite class $\cH$ has Littlestone dimension $d \leq \log\lvert \cH\rvert$.
    There are also many natural and interesting infinite classes with finite Littlestone dimension.
    For example, let $X=\mathbb{F}^n$ be an $n$-dimensional vector space over a field $\mathbb{F}$
    and let $\cH\subseteq\{\pm 1\}^X$ consist of all (indicators of) affine subspaces of dimension $\leq d$.
    The Littlestone dimension of $\cH$ is $d$.
    More generally, any class of hypotheses that can be described by
    polynomial \emph{equalities} of constant degree has finite Littlestone dimension(
    Note that if one replaces ``equalities'' with ``inequalities''
    then the Littlestone dimension may become unbounded while the VC dimension remains bounded. This is demonstrated, e.g., by halfspaces which are captured by polynomial inequalities of degree $1$).
    This can be generalized even further to classes that are definable 
    in {\it stable theories}. This (different, still) notion of stability is deep and well-explored in model theory.
    We refer the reader to~\cite{Chase19modelmachine}, Section 5.1 for more examples of stable theories and the Littlestone classes they correspond to.

\paragraph{Organization.}	
The rest of this manuscript is organized as follows. In  Section~\ref{sec:results} we formally state our main results
	and discuss some implications. Section~\ref{sec:overview} overviews some of the main ideas in the proofs.
	Sections~\ref{sec:preliminaries} - \ref{sec:wrapping}	contain complete proofs, and the last section (Section~\ref{sec:conc})
	concludes the paper with some suggestions for future work.

\subsection{Main Results}\label{sec:results}

We next present our main results.
	We begin with the statements concerning the relationship between online learning and differentially-private learning.
	In Section~\ref{sec:stability} we present and discuss the notion of global stability,
	and finally in Section~\ref{sec:boosting} we discuss an implication for private boosting.
	Throughout this section some standard technical terms are used.
	For definitions of these terms we refer the reader to Section~\ref{sec:preliminaries}
\correction{\footnote{An earlier version of this manuscript claimed an upper bound over the sample complexity that is exponential in the Littlestone dimension. The argument was erranous, and the current version contains a correction, which leads to double-exponential dependence in the Littlestone-dimension.}}{}.

\begin{theorem}[Littlestone Classes are Privately Learnable]\label{thm:main}
Let $\cH\subseteq\{\pm 1\}^X$ be a class with Littlestone dimension $d$,
let $\eps,\delta \in (0, 1)$ be privacy parameters, and let $\alpha,\beta \in (0, 1/2)$ be accuracy parameters.
For
\[n = \correction{O\left(\frac{2^{\tilde{O}(2^d)}+\log 1/\beta\delta}{\alpha\epsilon}\right)}{O\left(\frac{16^d \cdot d^2 \cdot (d + \log(1/\beta\delta))}{\alpha\eps}\right)} = O_d\left(\frac{\log(1/\beta\delta)}{\alpha\eps}\right)\]
there exists an $(\eps,\delta)$-DP learning algorithm such that for every realizable distribution $\cD$,
given an input sample $S\sim \cD^n$, the output hypothesis $f=\cA(S)$ satisfies $\loss_{\cD}(f)\leq \alpha$
with probability at least $1-\beta$, where the probability is taken over $S\sim \cD^n$ as well as the internal randomness of~$\cA$.
\end{theorem}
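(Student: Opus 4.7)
The plan is to decompose the proof into three modules: (i) construct a \emph{globally stable} weak learner $\cG$ in the sense defined in the introduction, (ii) convert global stability into differential privacy via a stable-histogram / frequent-items subroutine, and (iii) boost the resulting constant-accuracy DP learner to an arbitrarily small accuracy parameter $\alpha$. I expect Module (i) to be the main technical obstacle; Modules (ii) and (iii) are largely off-the-shelf.

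For Module (i) the goal is an algorithm $\cG$ which, on input of size $m = 2^{O(d)}$ drawn from any realizable distribution $\cD$, outputs some fixed (though $\cD$-dependent) hypothesis $h$ with probability at least $\eta = 2^{-O(d)}$, such that this $h$ has population error at most an absolute constant (say $1/4$). The natural building block is the Standard Optimal Algorithm $\soa$ of Littlestone: on any realizable run, $\soa$ makes at most $d$ mistakes and therefore traverses at most $d+1$ distinct internal hypotheses. My plan is to randomize both the prefix length and the order in which examples are fed to $\soa$, so that per-sample at least one of those $d+1$ hypotheses is hit with frequency $\Omega(1/d)$ by pigeonhole. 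The delicate point is promoting this \emph{per-sample} frequency to a genuinely \emph{global} one: we need a single hypothesis to be produced with probability $\eta$ over the joint randomness of the input sample and $\cG$. My plan here is a recursive doubling / Ramsey-type construction in which $\cG$ is built by iteratively running sub-learners on disjoint blocks of the sample and returning a majority vote whenever enough copies agree; the blow-up from $d$ levels of recursion is exactly what produces the $2^{O(d)}$ factors in both $m$ and $1/\eta$.

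For Module (ii) I apply the DP stable-histogram mechanism of Korolova--Kenthapadi--Mishra / Bun--Nissim--Stemmer: partition the full training set into $k = \tilde O(\log(1/\beta\delta)/(\eps\eta))$ disjoint blocks of size $m$, run $\cG$ independently on each block to obtain $k$ candidate hypotheses, and invoke $\Hist$ to privately release a hypothesis whose empirical frequency among the $k$ outputs exceeds $\eta/2$. Global stability of $\cG$ together with a Chernoff bound ensures that such a frequent hypothesis exists with probability $\geq 1-\beta/2$, while $\Hist$ is $(\eps,\delta)$-DP and succeeds with probability $\geq 1-\beta/2$ whenever $k = \Omega(\log(1/\beta\delta)/(\eps\eta))$. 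This yields an $(\eps,\delta)$-DP algorithm that outputs a hypothesis of constant error on $\cD$, using sample size $k\cdot m = 2^{O(d)}\cdot\log(1/\beta\delta)/\eps$.

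For Module (iii) I invoke standard DP accuracy amplification: split fresh data into a training portion, on which the constant-accuracy learner is repeatedly invoked to produce a short list of candidate hypotheses, and a validation portion, on which the exponential mechanism (or any DP selection rule) privately picks the candidate with smallest empirical error. Both steps are $(\eps,\delta)$-DP, and basic composition preserves the overall privacy budget up to constants. The validation step contributes a multiplicative $1/\alpha$ factor to the sample complexity, giving the stated bound $n = O(16^d \cdot d^2 \cdot (d+\log(1/\beta\delta))/(\alpha\eps))$ after tracking constants through the three modules.
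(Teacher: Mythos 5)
Your three-module decomposition (globally stable learner $\to$ stable histograms $\to$ private selection) matches the paper's high-level plan, and Module~(ii) is essentially the paper's Theorem~\ref{thm:selection}. But Module~(i), which you correctly identify as the crux, is missing the paper's central mechanism. A pigeonhole over the $d+1$ internal $\soa$ hypotheses does not by itself yield global stability, because those $d+1$ hypotheses depend on the sample: pigeonhole gives you a mistake count $i$ such that $\Pr[M=i]\ge 1/(d+1)$, but conditioned on $M=i$ the output hypothesis can still vary wildly with $S$. The fix you propose --- a recursive majority vote over sub-learners --- does not resolve this, because a majority vote over mostly-disagreeing hypotheses is itself an unstable object. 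What the paper actually does is fundamentally different: it constructs distributions $\cD_k$ by repeatedly holding ``random contests'' between two independent $\soa$ runs that disagree, appending a synthetic \emph{tournament example} $(x,y)$ with $y$ chosen uniformly at a disagreement point $x$. This deterministically forces an extra mistake on $\soa$, and the crucial observation is that if all $d$ tournament labels happen to agree with the target $c$ (a low-probability but nonnegligible event), then $\soa$ has made $d$ mistakes on realizable data and must output exactly $c$. That collapse onto a \emph{single} hypothesis is what produces global stability, and it is not a cosmetic choice that could be replaced by majority voting.

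Module~(iii) also does not work as described. Repeatedly running a constant-accuracy (say $1/4$) learner and then privately selecting the best candidate does not reduce error below the constant: selection is not boosting, and boosting to error $\alpha$ requires distribution reweighting, which brings in its own DP overhead (the paper explicitly discusses why this only gets you $\delta = \tilde O(1/n^2)$, not $\exp(-\Omega(n))$). The paper avoids needing any accuracy amplification: the globally stable learner \emph{directly} achieves arbitrary accuracy $\alpha$, because $\soa$ applied to $S\circ T$ is consistent with the fresh block $T\sim\cD^n$, so any hypothesis output with probability $\ge\eta$ must be consistent with $T$ with probability $\ge\eta$, giving $\loss_\cD(f)\le\ln(1/\eta)/n$ by the same Occam argument as \cref{prop:gs}. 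Thus the private learner in Theorem~\ref{thm:selection} only needs a \emph{selection} step (Generic Private Learner on a fresh sample to pick the $\alpha/2$-accurate hypothesis from the short list), not an amplification step.

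Finally, your claimed parameters ($\eta = 2^{-O(d)}$, $m = 2^{O(d)}$, final $n = O(16^d d^2(d+\log(1/\beta\delta))/\alpha\eps)$) match the bound from an earlier, \emph{erroneous} version of this argument. The error is precisely the ``independence'' heuristic implicit in your sketch: the tournament label at level $k$ is \emph{not} independent of the hypotheses chosen at earlier levels, since those hypotheses determine the disagreement point $x$. The corrected analysis uses the recursion $\rho_k \ge \tfrac12(\rho_{k-1}^2 - 8\cdot 2^{-2^{d+2}})$ and gives $\eta = 2^{-2^{O(d)}}$, hence a doubly exponential dependence on $d$ in the final sample complexity. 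You would need to either reproduce this recursion or supply a genuinely different argument to justify a singly exponential bound.
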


A similar result holds in the agnostic setting:

\begin{corollary}[Agnostic Learner for Littlestone Classes]\label{thm:agnostic}
Let $\cH\subseteq\{\pm 1\}^X$ be a class with Littlestone dimension $d$,
let $\eps$, and $\delta \in (0, 1)$ be privacy parameters, and let $\alpha,\beta \in (0, 1/2)$ be accuracy parameters.
For
\[n = O\left(\correction{\frac{2^{\tilde{O}(2^d)}+\log (1/\beta\delta)}{\alpha\epsilon}}{\frac{16^d \cdot d^2 \cdot (d + \log(1/\beta\delta))}{\alpha\epsilon}} +\frac{\textrm{VC}(\cH)+\log (1/\beta)}{\alpha^2\epsilon} \right)\]
there exists an $(\eps,\delta)$-DP learning algorithm such that for every distribution $\cD$,
given an input sample $S\sim \cD^n$, the output hypothesis $f=\cA(S)$ satisfies \[\loss_{\cD}(f)\leq \min_{h\in \cH} \loss_{\cD}(h)+ \alpha\]
with probability at least $1-\beta$, where the probability is taken over $S\sim \cD^n$ as well as the internal randomness of~$\cA$.
\end{corollary}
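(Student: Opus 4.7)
The plan is to derive the agnostic corollary from the realizable learner of Theorem~\ref{thm:main} via a standard private agnostic-to-realizable reduction. Since Littlestone dimension upper-bounds VC dimension, $\textrm{VC}(\cH) \le d < \infty$, so VC-based uniform convergence is available; the additive $\frac{\textrm{VC}(\cH) + \log(1/\beta)}{\alpha^2 \eps}$ term in the sample complexity represents the cost of privately selecting an approximately optimal hypothesis.

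Concretely, I would split $S$ into two disjoint blocks $S_1$ and $S_2$ whose sizes match, respectively, the realizable sample complexity of Theorem~\ref{thm:main} instantiated at $(\alpha/4,\beta/2,\eps/2,\delta/2)$, and $|S_2| = O\bigl((\textrm{VC}(\cH) + \log(1/\beta))/(\alpha^2\eps)\bigr)$. On $S_1$, I would use the realizable learner $\cA_r$ from Theorem~\ref{thm:main} to produce a small list $L$ of candidate hypotheses designed so that with high probability $L$ contains a hypothesis whose population loss is within $\alpha/2$ of the agnostic optimum $h^* = \arg\min_{h\in\cH}\loss_\cD(h)$. The key observation is that the sub-sample of $S_1$ consisting of examples correctly labeled by $h^*$ is realizable; applying $\cA_r$ to this sub-sample returns a hypothesis close to $h^*$. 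Since $h^*$ is unknown, $L$ is generated by iterating $\cA_r$ over the $\poly(|S_1|^d)$ distinct ``correctly-labeled'' restrictions in $\cH\vert_{S_1}$ guaranteed by Sauer--Shelah. On $S_2$, I would run the exponential mechanism with score $-\loss_{S_2}(\cdot)$, sensitivity $1/|S_2|$, and range $L$, selecting a hypothesis with empirical loss within $\alpha/2$ of $\min_{h \in L}\loss_{S_2}(h)$ at additive cost $O(\log|L|/(\alpha |S_2|\eps))$. Privacy follows from $(\eps,\delta)$-composition of the realizable learner on $S_1$ with the exponential mechanism on $S_2$; utility follows by combining the realizable learner's guarantee on $S_1$ with VC uniform convergence on $S_2$.

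The main obstacle is constructing $L$ in a differentially private manner while guaranteeing it contains a hypothesis competitive with $h^*$. Since $h^*$ is unknown, one cannot directly identify the realizable sub-sample to feed to $\cA_r$; the standard resolution is either to enumerate $\cA_r$ over the $\poly(|S_1|^d)$ effective labelings in $\cH\vert_{S_1}$ (bounded via Sauer--Shelah), or to invoke an off-the-shelf private agnostic-to-realizable transformation and absorb its cost into the VC term. Once $L$ is in hand, the privacy composition and the exponential-mechanism utility analysis are routine, giving the stated $(\eps,\delta)$-DP agnostic learner.
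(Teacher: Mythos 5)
The paper's own proof of \Cref{thm:agnostic} is a one-liner: it invokes Theorem~2.3 of \cite{alon2020closure}, a black-box realizable-to-agnostic transformation for $(\eps,\delta)$-DP learners, combined with a ``secrecy-of-the-sample'' argument to handle general $\eps$. Your second alternative (``invoke an off-the-shelf private agnostic-to-realizable transformation and absorb its cost into the VC term'') is exactly this, so at that level the plan is aligned with the paper.

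Your primary route, however, is the older \cite{beimel2015learning}-style LabelBoost reduction: enumerate over the $\le |S_1|^{O(d)}$ labelings in $\cH|_{S_1}$ (Sauer--Shelah), run $\cA_r$ on each relabeling of $S_1$ to build a list $L$, and select from $L$ via the exponential mechanism on $S_2$. The paper explicitly flags a gap with this route in this setting: the realizable learner from Theorem~\ref{thm:main} is \emph{improper}, and the \cite{beimel2015learning} transformation is stated only for proper realizable learners; this is precisely why the paper reaches for the more general transformation of \cite{alon2020closure}. Beyond that, your privacy justification is imprecise in a way that matters. You write that privacy ``follows from $(\eps,\delta)$-composition of the realizable learner on $S_1$ with the exponential mechanism on $S_2$,'' but step~1 makes $\poly(|S_1|^d)$ correlated calls to $\cA_r$, all on (relabelings of) the \emph{same} dataset $S_1$, so naive composition would inflate the privacy loss by a $\poly(|S_1|^d)$ factor, which is useless. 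The whole content of the LabelBoost analysis is that this blow-up does \emph{not} occur, via a careful argument about how $\cH|_{S_1}$ and the induced list $L(S_1)$ change when one example of $S_1$ is swapped; that argument is not ``routine composition,'' and it is exactly where the properness hypothesis is used and where \cite{alon2020closure} has to do extra work. So: the destination and the rough shape of the reduction are right, and your fallback option is the paper's actual proof, but your main plan as written both omits the improperness issue and substitutes a privacy analysis that would not go through.
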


\Cref{thm:agnostic} follows from \Cref{thm:main} by Theorem 2.3 in \cite{alon2020closure}
which provides a general mechanism to transform a learner in the realizable setting to a learner in the agnostic setting\footnote{Theorem 2.3 in \cite{alon2020closure} is based  on a previous realizable-to-agnostic transformation from~\cite{beimel2015learning} which applies to {\it proper} learners. Here we require the more general transformation from~\cite{alon2020closure} as the learner implied by~\Cref{thm:main} may be improper.}.
We note that formally the transformation in \cite{alon2020closure} is stated for a constant $\eps=O(1)$. Taking $\eps=O(1)$ is without loss of generality as a standard ``secrecy-of-the-sample'' argument can be used to convert this learner into one which is $(\eps, \delta)$-differentially private by increasing the sample size by a factor of roughly $1/\eps$ and running the algorithm on a random subsample. See~\cite{KasiviswanathanLNRS11, Vadhan17} for further details.

\begin{theorem}[Private PAC Learning $\equiv$ Online Prediction.]\label{thm:equivalence}
The following statements are equivalent for a class $\cH\subseteq \{\pm 1\}^X$:
\begin{enumerate}
\item $\cH$ is online learnable.
\item $\cH$ is approximate differentially-privately PAC learnable.
\end{enumerate}
\end{theorem}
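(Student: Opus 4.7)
The plan is to deduce \Cref{thm:equivalence} as a direct corollary by combining three ingredients: (i) Littlestone's combinatorial characterization of online learnability, (ii) the main new result \Cref{thm:main} of this paper, and (iii) the result of Alon et al.\ (STOC 2019) in the opposite direction. Concretely, I would reduce both implications to the statement that a class is online learnable if and only if its Littlestone dimension $d$ is finite, which is already recorded in the introduction and attributed to \cite{Littlestone87online,Bendavid09agnostic}.

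For the direction $(1)\Rightarrow (2)$: assume $\cH$ is online learnable. By Littlestone's characterization, $\Ldim(\cH)=d<\infty$. Now invoke \Cref{thm:main} with this value of $d$ and any desired privacy/accuracy parameters $\eps,\delta,\alpha,\beta\in(0,1/2)$: it produces an $(\eps,\delta)$-DP algorithm achieving error at most $\alpha$ with probability at least $1-\beta$ on every realizable distribution, using a finite sample of size $n=O_d(\log(1/\beta\delta)/(\alpha\eps))$. This demonstrates private PAC learnability of $\cH$ in the realizable setting; the agnostic case follows from \Cref{thm:agnostic}. Hence $\cH$ is approximate differentially-privately PAC learnable. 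This is the substantive direction, and \emph{the main technical content of the entire paper} lives in \Cref{thm:main} (via the notion of global stability). At the level of \Cref{thm:equivalence} itself, however, no additional work is needed beyond citing \Cref{thm:main}.

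For the converse direction $(2)\Rightarrow (1)$: assume $\cH$ is approximate differentially-privately PAC learnable. By the result of Alon, Livni, Malliaris, and Moran (STOC 2019) \cite{AlonLMM19}, any class that admits an approximate-DP PAC learner must have finite Littlestone dimension. Applying Littlestone's characterization once more, finiteness of $\Ldim(\cH)$ implies $\cH$ is online learnable.

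There is essentially no obstacle at the level of this proof: every step is a citation. The only mild subtlety is to make sure the notion of ``approximate DP PAC learnability'' used in~\cite{AlonLMM19} matches the one asserted by \Cref{thm:main} (both allow improper learners and both permit the privacy parameters to be constants independent of the desired accuracy, which is handled by a standard secrecy-of-the-sample amplification as noted in the discussion following \Cref{thm:agnostic}). Once these conventions are aligned, chaining the two implications through Littlestone's characterization closes the equivalence.
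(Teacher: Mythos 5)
Your proposal matches the paper's own proof exactly: Theorem~\ref{thm:equivalence} is derived as a corollary by combining Theorem~\ref{thm:main} (for $1 \to 2$) and the result of~\cite{AlonLMM19} (for $2 \to 1$), with Littlestone's combinatorial characterization of online learnability linking the two. Nothing is missing and no alternative route is taken.
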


Theorem~\ref{thm:equivalence} is a corollary of Theorem~\ref{thm:main} (which gives $1\to 2$)
	and the result by~\cite{AlonLMM19} (which gives $2\to 1$).
	We comment that a quantitative relation between the learning and regret rates is also implied:
	it is known that the optimal regret bound for~$\cH$ is $\tilde \Theta_d(\sqrt{T})$,
	where the $\tilde \Theta_d$ conceals a constant which depends on the Littlestone dimension of~$\cH$~\cite{Bendavid09agnostic}.
	Similarly, we get that the optimal sample complexity of agnostically privately learning~$\cH$ is~$\Theta_d(\frac{\log({1}/(\beta\delta))}{\alpha^2\eps})$.
	
We remark however that the above equivalence is mostly interesting from a theoretical perspective, 
	and should not be regarded as an efficient transformation between online and private learning.
	Indeed, the Littlestone dimension dependencies concealed by the $\tilde \Theta_d(\cdot)$ in the above bounds on the regret and sample complexities
	may be very different from one another. For example, there are classes
	for which the $\Theta_d(\frac{\log({1}/(\beta\delta))}{\alpha\eps})$  bound hides a $\mathrm{poly}(\log^*(d))$ dependence, 
	and the $\tilde \Theta_d(\sqrt{T})$  bound hides a $\Theta(d)$ dependence.
	One example which attains both of these dependencies is the class of thresholds over a linearly ordered domain of size~$2^d$~\cite{AlonLMM19,kaplan2019privately}. 

\subsubsection{Global Stability}\label{sec:stability}

Our proof of Theorem~\ref{thm:main}, which establishes that every Littlestone class
	can be learned privately, hinges on an intermediate property which we call {\it global stability}:
	
\begin{definition}[Global Stability]
Let $n\in\mathbb{N}$ be a sample size and $\eta > 0$ be a global stability parameter.
An algorithm $\cA$ is $(n,\eta)$-globally-stable with respect to a distribution $\cD$ 
if there exists an hypothesis $h$ such that 
\[\Pr_{S\sim\cD^n}[A(S) = h] \geq \eta.\]
\end{definition}

While global stability is a rather strong property, it holds automatically for learning algorithms using a finite hypothesis class. By an averaging argument, every learner using $n$ samples which produces a hypothesis in a finite hypothesis class $\cH$ is $(n, 1/|\cH|)$-globally-stable. The following proposition generalizes ``Occam's Razor" for finite hypothesis classes to show that global stability is enough to imply similar generalization bounds in the realizable setting.

\begin{proposition}[Global Stability $\implies$ Generalization]\label{prop:gs}
Let $\cH\subseteq\{\pm 1\}^X$ be a class, and assume that $\cA$ is a \underline{consistent} learner for $\cH$ 
(i.e.\ $\loss_S(\cA(S))=0$ for every realizable sample $S$).
Let~$\cD$ be a realizable distribution such that $\cA$ is $(n,\eta)$-globally-stable with respect to $\cD$, 
and let $h$ be a hypothesis such that $\Pr_{S\sim\cD^n}[A(S) = h] \geq \eta$, as guaranteed by the definition of global stability. 
Then,
\[\loss_\cD(h) \leq \frac{\ln(1/\eta)}{n}.\]
\end{proposition}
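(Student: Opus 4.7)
The plan is to leverage the consistency of $\cA$ to relate global stability directly to the classical Occam's-razor-style bound. Let $\alpha = \loss_\cD(h)$. Since $\cA$ is a consistent learner and $\cD$ is realizable, the event $\cA(S) = h$ entails that $h$ has zero empirical loss on $S$, i.e.\ $h(x_i) = y_i$ for every $(x_i,y_i)\in S$. Consequently,
\[
\Pr_{S\sim\cD^n}[\cA(S)=h] \;\leq\; \Pr_{S\sim\cD^n}[\loss_S(h)=0].
\]
Since the examples in $S$ are drawn i.i.d.\ from $\cD$, the right-hand side equals $(1-\alpha)^n$, because each example is individually labeled correctly by $h$ with probability exactly $1-\alpha$.

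Next I would combine this with the global stability hypothesis $\Pr_{S\sim\cD^n}[\cA(S)=h] \geq \eta$, yielding $\eta \leq (1-\alpha)^n$. Applying the standard inequality $(1-\alpha)^n \leq e^{-\alpha n}$ and taking logarithms gives $\alpha \leq \ln(1/\eta)/n$, which is precisely the claimed bound.

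There is no real obstacle here; the proof is a single-line application of Occam's razor to the particular hypothesis $h$ singled out by global stability. The only conceptual point worth highlighting is that global stability plays the role that a prior weight of $1/|\cH|$ plays in the finite-class analysis: it lower-bounds the probability that $\cA$ outputs $h$, which in turn (via consistency) forces $h$ to be compatible with $S$, and the latter event is exponentially rare when $\loss_\cD(h)$ is large.
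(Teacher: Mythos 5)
Your proof is correct and matches the paper's argument essentially verbatim: both observe that consistency of $\cA$ implies the event $\{\cA(S)=h\}$ is contained in the event $\{h \text{ is consistent with } S\}$, giving $\eta \le (1-\alpha)^n$, and then conclude via $1-\alpha \le e^{-\alpha}$. There is no meaningful difference in approach.
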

\begin{proof}
Let $\alpha$ denote the loss of $h$, i.e.\ $\loss_\cD(h) = \alpha$,
and let $E_1$ denote the event that $h$ is consistent with the input sample $S$.
Thus, $\Pr[E_1] = (1-\alpha)^n$. 
Let $E_2$ denote the event that~$\cA(S)=h$. By assumption, $\Pr[E_2]\geq \eta$.
Now, since $\cA$ is consistent we get that $E_2\subseteq E_1$,
and hence that $\eta\leq(1-\alpha)^n$.
This finishes the proof (using the fact that $1-\alpha \leq e^{-\alpha}$ and taking the logarithm of both sides).
\end{proof}



Another way to view global stability is in the context of \emph{pseudo-deterministic algorithms} \cite{Gat11pseudo}. A pseudo-deterministic algorithm is a randomized algorithm which yields some fixed output with high probability. Thinking of a realizable distribution $\cD$ as an instance on which PAC learning algorithm has oracle access, a globally-stable learner is one which is ``weakly'' pseudo-deterministic in that it produces some fixed output with probability bounded away from zero. A different model of pseudo-deterministic learning, in the context of learning from membership queries, was defined and studied by Oliveira and Santhanam~\cite{OliveiraS18}.

We prove Theorem~\ref{thm:main} by constructing, for a given Littlestone class $\cH$,
an algorithm $\cA$ which is globally stable with respect to \underline{every} realizable distribution.


%

\subsubsection{Boosting for Approximate Differential Privacy}\label{sec:boosting}

Our characterization of private learnability in terms of the Littlestone dimension has new consequences for boosting the privacy and accuracy guarantees of differentially-private learners. Specifically, it shows that the existence of a learner with weak (but non-trivial) privacy and accuracy guarantees implies the existence of a learner with any desired privacy and accuracy parameters --- in particular, one with $\delta(n) = \exp(-\Omega(n))$.

\begin{theorem} \label{thm:adp-boost}
	There exists a constant $c > 0$ for which the following holds. Suppose that for some sample size $n_0$ there is an $(\eps_0, \delta_0)$-differentially private learner $\cal W$ for a class $\cH$ satisfying the guarantee
	\[\Pr_{S\sim \cD^{n_0}}[\loss_{\cD}({\cal W}(S)) > \alpha_0 ] < \beta_0\]
	for $\eps_0 = 0.1, \alpha_0 = \beta_0 = 1/16$, and $\delta_0 \le c/n_0^2\log n_0$.
	
	Then, there exists a constant $C_\cH$ such that for every $\alpha, \beta, \eps, \delta \in (0, 1)$ there exists an $(\eps, \delta)$-differentially private learner for $\cH$ with 
	\[\Pr_{S\sim \cD^{n}}[\loss_{\cD}({\cA}(S)) > \alpha] < \beta\]
	whenever $n \ge C_\cH \cdot \log(1/\beta\delta)/\alpha\eps$.
\end{theorem}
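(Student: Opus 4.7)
The plan is to use the Littlestone dimension as a pivot between the weak-learning hypothesis and the strong-learning conclusion. Given the weak learner $\cal W$, the first step is to invoke the lower bound of Alon, Livni, Malliaris, and Moran~\cite{AlonLMM19} contrapositively to conclude that $\Ldim(\cH) < \infty$. Their lower bound shows that any $(\eps, \delta)$-DP learner for a class of Littlestone dimension $d$ with constant accuracy requires $n = \Omega(\log^* d)$ samples when $\eps = O(1)$ and $\delta$ is sufficiently small compared to $n$ (roughly $\delta = o(1/n^2\log n)$). The specific scaling $\delta_0 \le c/(n_0^2\log n_0)$ in the hypothesis is tailored precisely so that the existence of $\cal W$ rules out infinite Littlestone dimension: if $\Ldim(\cH)$ were infinite, the class would embed threshold functions over arbitrarily large linearly ordered domains, contradicting the existence of $\cal W$ in the ALMM regime.

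Once the first step yields a finite $d = \Ldim(\cH)$ (depending only on $\cH$ and $n_0$), the second step is a direct application of Theorem~\ref{thm:main}. For any desired $\alpha, \beta, \eps, \delta \in (0,1)$, Theorem~\ref{thm:main} furnishes an $(\eps, \delta)$-DP learner for $\cH$ using $n = O_d\bigl(\log(1/\beta\delta)/\alpha\eps\bigr)$ samples. Absorbing the $d$-dependent factor into a class-dependent constant $C_\cH$ yields the stated sample complexity. Since $d$ depends only on $\cH$, so does $C_\cH$, as required.

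The main obstacle is the first step. Qualitatively it is immediate — infinite Littlestone dimension is incompatible with approximate DP learnability in the relevant regime — but making it match the specific quantitative hypothesis requires that the assumed $(\eps_0, \delta_0, \alpha_0, \beta_0, n_0)$-learner genuinely lie inside the parameter regime covered by the ALMM lower bound. The constant $c > 0$ in the statement is chosen at most the reciprocal of the corresponding constant in their argument, ensuring this. Relative to this, the second step is off the shelf: it simply transports the finite Littlestone dimension into arbitrarily strong privacy and accuracy guarantees via the main theorem of this paper, with no additional technical content. The resulting amplification is qualitative: weak privacy at a particular sample size automatically upgrades to learners with any $\eps, \delta > 0$ — in particular with $\delta(n) = \exp(-\Omega(n))$ — at the price of a class-dependent constant $C_\cH$.
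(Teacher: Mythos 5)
Your proposal is correct and takes essentially the same two-step route as the paper: invoke the lower bound of Alon et al.\ to deduce $\Ldim(\cH) < \infty$ from the existence of the weak learner $\cal W$ in the stated parameter regime, then apply Theorem~\ref{thm:main} with $d = \Ldim(\cH)$ and absorb the $d$-dependence into $C_{\cH}$. (One tiny slip, self-corrected later in your writeup: $d$ depends only on $\cH$, not on $n_0$; the weak learner's existence gives an upper \emph{bound} on $d$, but $d$ itself is a class invariant.)
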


Given a weak learner $\cal W$ as in the statement of Theorem~\ref{thm:adp-boost}, the results of~\cite{AlonLMM19} imply that $\Ldim(\cH)$ is finite. Hence Theorem~\ref{thm:main} allows us to construct a learner for $\cH$ with arbitrarily small privacy and accuracy, yielding Theorem~\ref{thm:adp-boost}. The constant $C_{\cH}$ in the last line of the theorem statement suppresses a factor depending on $\Ldim(\cH)$.

Prior to our work, it was open whether arbitrary learning algorithms satisfying approximate differential privacy could be boosted in this strong a manner. We remark, however, that in the case of \emph{pure} differential privacy, such boosting can be done algorithmically and efficiently. Specifically, given an $(\eps_0, 0)$-differentially private weak learner as in the statement of Theorem~\ref{thm:adp-boost}, one can first apply random sampling to improve the privacy guarantee to $(p\eps_0, 0)$-differential privacy at the expense of increasing its sample complexity to roughly $n_0 /p$ for any $p \in (0, 1)$. The Boosting-for-People construction of Dwork, Rothblum, and Vadhan~\cite{DworkRV10} (see also~\cite{BunCS20}) then produces a strong learner by making roughly $T \approx \log(1/\beta)/\alpha^2$ calls to the weak learner. By composition of differential privacy, this gives an $(\eps, 0)$-differentially private strong learner with sample complexity roughly $n_0 \cdot \log(1/\beta)/\alpha^2\eps$.

What goes wrong if we try to apply this argument using an $(\eps_0, \delta_0)$-differentially private weak learner? Random sampling still gives a $(p\eps_0, p\delta_0)$-differentially private weak learner with sample complexity $n_0 / p$. However, this is not sufficient to improve the $\delta$ parameter of the learner \emph{as a function of the number of samples $n$}. Thus the strong learner one obtains using Boosting-for-People still at best guarantees $\delta(n) = \tilde{O}(1/n^2)$. Meanwhile, Theorem~\ref{thm:adp-boost} shows that the existence of a $(0.1, \tilde{O}(1/n^2))$-differentially private learner for a given class implies the existence of a $(0.1, \exp(-\Omega(n))$-differentially private learner for that class.

We leave it as an interesting open question to determine whether this kind of boosting for approximate differential privacy can be done algorithmically.

\section{Proof Overview}\label{sec:overview}
We next give an overview of the main arguments used in the proof of Theorem~\ref{thm:main}.
The proof consist of two parts:
(i) we first show that every class with a finite Littlestone dimension
can be learned by a globally-stable algorithm, and
(ii) we then show how to generically obtain a differentially-private learner from any globally-stable learner.

\subsection{Step 1: Finite Littlestone Dimension $\implies$ Globally-Stable Learning}

Let $\cH$ be a concept class with Littlestone dimension $d$. Our goal is to design a globally-stable learning algorithm for $\cH$ with stability parameter $\eta = \correction{2^{-2^{O(d)}}}{\exp(-d)}$ and sample complexity $n=\correction{2^{2^{O(d)}}}{\exp(d)}$.
	We will sketch here a weaker variant of our construction which uses the same ideas but is simpler to describe.

The property of $\cH$ that we will use is that it can be online learned in the realizable setting with at most $d$ mistakes 
	(see Section~\ref{sec:online} for a brief overview of this setting). 
	Let $\cD$ denote a realizable distribution with respect to which we wish to learn in a globally-stable manner.
	That is, $\cD$ is a distribution over examples $(x,c(x))$ where $c\in\cH$ is an unknown target concept.
	Let $\mathcal{A}$ be a learning algorithm that makes at most $d$ mistakes while learning an unknown concept from $\cH$ in the online model.
	Consider applying $\mathcal{A}$ on a sequence $S=((x_1,c(x_1))\ldots (x_n,c(x_n)))\sim\cD^n$,
	and denote by $M$ the random variable counting the number of mistakes~$\mathcal{A}$ makes in this process.
	The mistake-bound guarantee on $\cA$ guarantees that~$M\leq d$ always. 
	Consequently, there is $0\leq i \leq d$ such that 
	\[\Pr[M=i] \geq \frac{1}{d+1}.\]
	Note that we can identify, with high probability, an $i$ such that $\Pr[M=i] \geq 1/2d$ by running~$\cA$ on $O(d)$ samples from $\cD^n$.
	We next describe how to handle each of the $d+1$ possibilities for $i$. 
	Let us first assume that $i=d$, namely that
\[\Pr[M=d] \geq \frac{1}{2d}.\]
	We claim that in this case we are done: 
	indeed, after making $d$ mistakes it must be the case that $\cA$ has completely identified the target concept $c$
	(or else $\cA$ could be presented with another example which forces it to make $d+1$ mistakes).
	Thus, in this case it holds with probability at least~$1/2d$ that $\cA(S)=c$ and we are done.
	Let us next assume that $i=d-1$, namely that
\[\Pr[M=d-1] \geq \frac{1}{2d}.\] 
	The issue with applying the previous argument here 
	is that before making the $d$'th mistake, $\cA$ can output many different hypotheses (depending on the input sample $S$).
	We use the following idea: draw two samples $S_1,S_2 \sim \cD^n$ independently, 
	and set $f_1 = \cA(S_1)$ and $f_2=\cA(S_2)$.
	Condition on the event that the number of mistakes made by $\cA$ on each of $S_1,S_2$ is exactly~$d-1$
	(by assumption, this event occurs with probability at least $(1/2d)^2$)
	and consider the following two possibilities: 
	\begin{itemize}
	\item[(i)] $\Pr[f_1=f_2]\geq\frac{1}{4}$, 
	\item[(ii)] $\Pr[f_1=f_2] < \frac{1}{4}$.
	\end{itemize}	
	If (i) holds then using a simple calculation one can show that there is $h$ such that $\Pr[A(S) = h] \geq \frac{1}{(2d)^2}\cdot \frac{1}{4}$ and we are done.
	If (ii) holds then we apply the following {\it ``random contest''} between $S_1,S_2$:
	\begin{enumerate}
	\item Pick $x$ such that $f_1(x)\neq f_2(x)$ and draw $y\sim\{\pm 1\}$ uniformly at random.
	\item If $f_1(x)\neq y$ then the output is $\cA(S_1 \circ (x,y))$, 
	where $S_1\circ (x,y)$ denotes the sample obtained by appending $(x,y)$ to the end of $S$.
	In this case we say that $S_1$ ``won the contest''.
	\item Else, $f_2(x)\neq y$ then the output is  $\cA(S_2 \circ (x,y))$.
	In this case we that $S_2$ ``won the contest''.
	\end{enumerate}
	Note that adding the auxiliary example $(x,y)$ forces $\cA$ to make exactly $d$ mistakes on $S_i\circ (x,y)$.
	Now, if $y\sim\{\pm 1\}$ satisfies $y = c(x) $ then by the mistake-bound argument it holds that $\cA(S_i\circ (x,y))=c$.
	Therefore, since $\Pr_{y\sim\{\pm 1\}}[c(x)=y] = 1/2$, it follows that 
	\[\Pr_{S_1,S_2, y}[\cA(S_i\circ (x,y))=c] \geq \frac{1}{(2d)^2}\cdot \frac{3}{4}\cdot\frac{1}{2} =\Omega(1/d^2),\] 
	and we are done.

Similar reasoning can be used by induction to handle the remaining cases (the next one would be that $\Pr[M=d-2] \geq \frac{1}{2d}$, and so on). \correction{As the number of mistakes reduces, we need to guess more labels, to enforce mistakes on the algorithm. As we guess more labels the success rate reduces, nevertheless we never need to make more then $2^d$ such guesses.  (Note that the random contests performed by the algorithm can naturally be presented using the internal nodes of a binary tree of depth $\leq d$ ). }{}
	The proof we present in Section~\ref{sec:LSstable} is based on a similar idea of performing random contests, 
	although the construction becomes more complex to handle other issues, such as generalization, which were not addressed here. 
	For more details we refer the reader to the complete argument in Section~\ref{sec:LSstable}.

\subsection{Step 2: Globally-Stable Learning $\implies$ Differentially-Private Learning}

Given a globally-stable learner $\cA$ for a concept class $\cH$, we can obtain a differentially-private learner using standard techniques in the literature on private learning and query release. If $\cA$ is a $(\eta, m)$-globally stable learner with respect to a distribution $\cD$, we obtain a differentially-private learner using roughly $m/\eta$ samples from that distribution as follows. We first run $\cA$ on $k \approx 1/\eta$ independent samples, non-privately producing a list of $k$ hypotheses. We then apply a differentially-private ``Stable Histograms'' algorithm~\cite{KorolovaKMN09, BunNS16} to this list which allows us to privately publish a short list of hypotheses that appear with frequency $\Omega(\eta)$. Global stability of the learner $\cA$ guarantees that with high probability, this list contains \emph{some} hypothesis $h$ with small population loss. We can then apply a generic differentially-private learner (based on the exponential mechanism) on a fresh set of examples to identify such an accurate hypothesis from the short list.


\section{Preliminaries}\label{sec:preliminaries}
\subsection{PAC Learning}
We use standard notation from statistical learning; see, e.g.,~\cite{Shalev14book}.
	Let $X$ be any ``domain'' set and consider the ``label'' set~$Y=\{\pm 1\}$. A {\it hypothesis} is a function $h : X\to Y$, which we alternatively write as an element of $Y^X$.
	An {\it example} is a pair $(x, y) \in X\times Y$. A {\it sample} $S$ is a finite sequence of examples.
\begin{definition}[Population \& Empirical Loss]
Let $\cD$ be a distribution over $X \times \{\pm 1\}$. The population loss of a hypothesis~$h : X \to \{\pm 1\}$ is defined by
\[\loss_{\cD}(h) = \Pr_{(x, y) \sim \cD}[h(x) \ne y].\]
Let $S=\bigl((x_i,y_i)\bigr)_{i=1}^n$ be a sample. The empirical loss of $h$ with respect to $S$ is defined by
\[\loss_{S}(h) = \frac{1}{n}\sum_{i=1}^n1[h(x_i)\neq y_i].\]
\end{definition}
Let $\cH\subseteq Y^X$ be a {\it hypothesis class}.
A sample $S$ is said to be {\it realizable by $\cH$} if there is $h\in H$ such that $\loss_S(h)=0$.
A distribution $\cD$ is said to be {\it realizable by $\cH$} if there is~$h\in H$ such that~$\loss_\cD(h)=0$.
A {\it learning algorithm} $A$ is a (possibly randomized) mapping taking input samples to output hypotheses.
We also use the following notation: for samples $S,T$, let~$S\circ T$ denote the combined sample obtained by appending $T$ to the end of $S$.



\subsection{Online Learning}\label{sec:online}
\paragraph{Littlestone Dimension.}
The Littlestone dimension is a combinatorial parameter that captures mistake and regret bounds in online learning \cite{Littlestone87online,Bendavid09agnostic}.\footnote{It appears that the name ``Littlestone dimension'' was coined in~\cite{Bendavid09agnostic}.}
	Its definition uses the notion of {\it mistake trees}. A mistake tree is a binary decision tree whose internal nodes are labeled by elements of $X$.
	Any root-to-leaf path in a mistake tree can be described as a sequence of examples 
	$(x_1,y_1),...,(x_d,y_d)$, where $x_i$ is the label of the $i$'th 
	internal node in the path, and $y_i=+1$ if the $(i+1)$'th node  
	in the path is the right child of the $i$'th node and $y_i = -1$ otherwise.
	We say that a mistake tree $T$ is {\it shattered }by $\cH$ if for any root-to-leaf path
	$(x_1,y_1),...,(x_d,y_d)$ in $T$ there is an $h\in \cH$ such that $h(x_i)=y_i$ for all $i\leq d$ (see Figure~\ref{fig:shatteredtree}).
	The Littlestone dimension of $\cH$, denoted $\Ldim(\cH)$,  is the depth of largest
	complete tree that is shattered by~$\cH$. We say that $\cH$ is a Littlestone class if it has finite Littlestone dimension.

\begin{figure}
\centering
\includegraphics[scale=0.3]{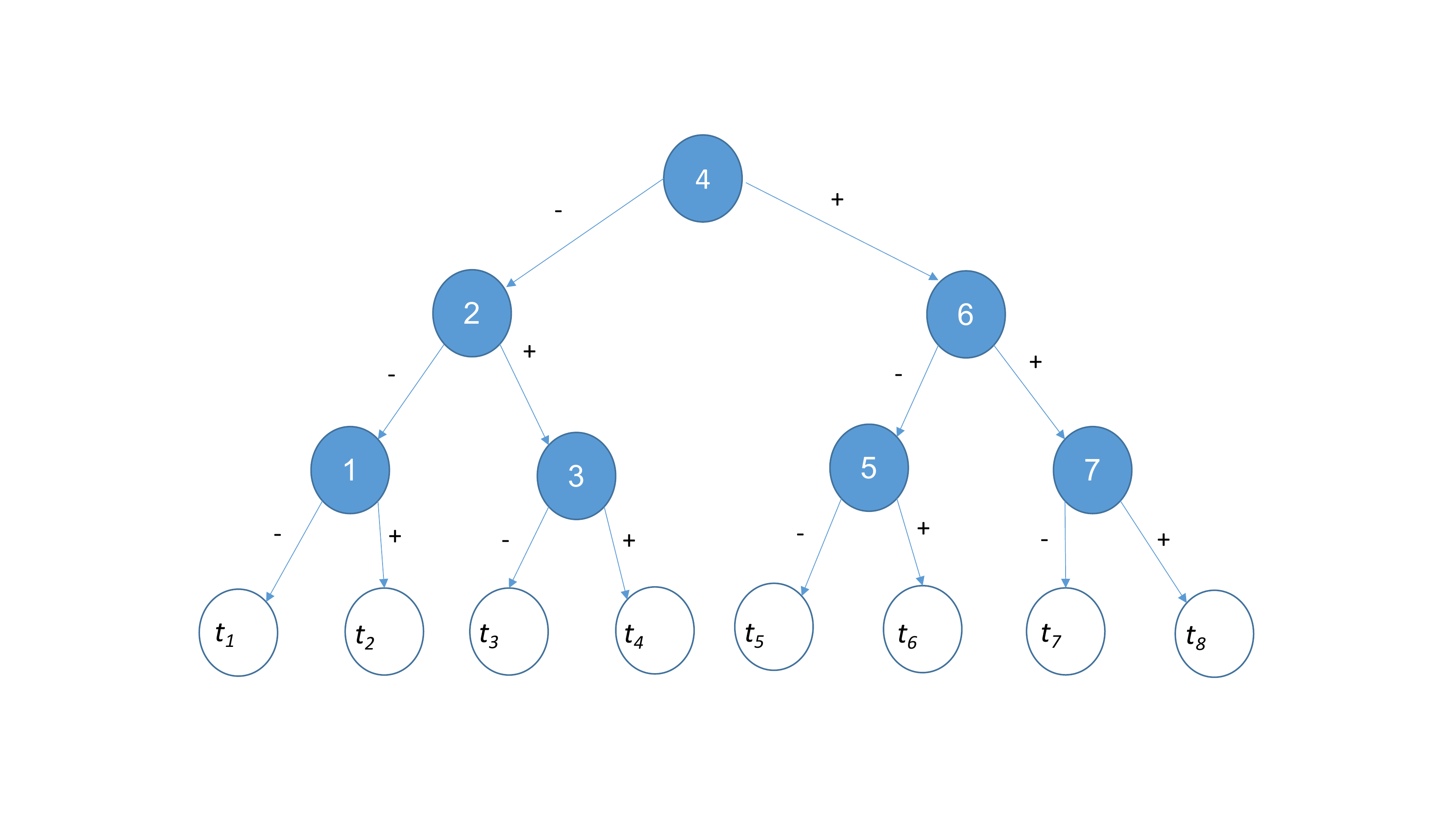}
\caption{\small{A tree shattered by the class  $\cH\subseteq\{\pm 1\}^8$ that contains the threshold functions $t_i$, 
where $t_i(j)=+1$ if and only if $i\leq j$.}}\label{fig:shatteredtree}
\end{figure}

\paragraph{Mistake Bound and the Standard Optimal Algorithm ($\soa$).}
The simplest setting in which learnability is captured by the Littlestone dimension is called the {\it mistake-bound model}~\cite{Littlestone87online}.
	Let $\cH\subseteq \{\pm 1\}^X$ be a fixed hypothesis class known to the learner. 
	The learning process takes place in a sequence of trials, 
	where the order of events in each trial $t$ is as follows: 
	\begin{itemize}
	\item[(i)] the learner receives an instance~$x_t\in X$,
	\item[(ii)] the learner responses with a prediction $\hat y_t\in \{\pm 1\}$, and 
	\item[(iii)] the learner is told whether or not the response was correct.
	\end{itemize}
	We assume that the examples given to the learner are realizable in the following sense: For the entire sequence of trials,
	there is a hypothesis $h\in \cH$ such that $y_t = h(x_t)$ for every instance $x_t$ and correct response $y_t$. An algorithm in this model learns $\cH$ with mistake bound $M$ if for every realizable sequence of examples presented to the learner, it makes a total of at most $M$ incorrect predictions.
	
Littlestone showed that the minimum mistake bound achievable by any online learner is exactly $\Ldim(\cH)$~\cite{Littlestone87online}.
	Furthermore, he described an explicit algorithm, called the {\it Standard Optimal Algorithm} ($\soa$),
	which achieves this optimal mistake bound.
	
\begin{tcolorbox}
	\begin{center}
		{\bf Standard Optimal Algorithm ($\soa$)}\\
	\end{center}
	\begin{enumerate}
		\item Initialize $\cH_1 = \cH$.
		\item For trials $t = 1, 2, \dots$:
		\begin{itemize}
			\item[(i)] For each $b \in \{\pm 1\}$ and $x \in X$, let $\cH_t^b(x) = \{h \in \cH_t : h(x) = b\}$. Define $h : X \to \{\pm 1\}$ by $h_t(x) = \argmax_b \Ldim(\cH_t^{b}(x))$.
			\item[(ii)] Receive instance $x_t$.
			\item[(iii)] Predict $\hat{y}_t = h_t(x_t)$.
			\item[(iv)] Receive correct response $y_t$.
			\item[(v)] Update $\cH_{t+1} = \cH_t^{y_t}(x_t)$.
		\end{itemize}
	\end{enumerate}
\end{tcolorbox}

\paragraph{Extending the $\soa$ to non-realizable sequences.}
Our globally-stable learner for Littlestone classes will make use of an optimal online learner in the mistake bound model.
	For concreteness, we pick the $\soa$ (any other optimal algorithm will also work).
         It will be convenient to extend the $\soa$ to sequences which are not necessarily realizable by a hypothesis in $\cH$.
	We will use the following simple extension of the $\soa$ to non-realizable samples:
\begin{definition}[Extending the $\soa$ to non-realizable sequences]\label{def:soaext}
Consider a run of the $\soa$ on examples $(x_1,y_1),\ldots, (x_m,y_m)$, 
and let $h_t$ denote the predictor used by the $\soa$ after seeing the first $t$ examples 
(i.e., $h_t$ is the rule used by the $\soa$ to predict in the $(t+1)$'st trial).
Then, after observing both $x_{t+1},y_{t+1}$ do the following:
\begin{itemize}
\item If the sequence $(x_1,y_1),\ldots, (x_{t+1},y_{t+1})$ is realizable by some $h\in\cH$ then apply the usual update rule of the $\soa$ to obtain $h_{t+1}$.
\item Else, set $h_{t+1}$ as follows: $h_{t+1}(x_{t+1}) = y_{t+1}$, and $h_{t+1}(x)=h_t(x)$ for every $x\neq x_{t+1}$.
\end{itemize}
\end{definition}
Thus, upon observing a non-realizable sequence, this update rule locally updates the maintained predictor $h_t$ to agree with the last example.

\subsection{Differential Privacy}
We use standard definitions and notation from the differential privacy literature.
For more background see, e.g., the surveys~\cite{DworkR14,Vadhan17}. 
For $a, b, \eps, \delta \in [0, 1]$ let $a\approx_{\eps,\delta} b$ denote the statement
\[a\leq e^{\eps}b + \delta ~\text{  and  }~   b\leq e^\eps a + \delta.\]
We say that two probability distributions $p,q$ are {\it $(\eps,\delta)$-indistinguishable} if 
$p(E) \approx_{\eps,\delta} q(E)$ for every event~$E$.
\begin{definition}[Private Learning Algorithm]\label{def:private}
A randomized algorithm 
\[A: (X\times \{\pm 1\})^m \to \{\pm 1\}^X\] 
is $(\eps,\delta)$-differentially-private 
if for every two samples $S,S'\in (X\times \{\pm 1\})^n$ that disagree on a single example,  
the output distributions $A(S)$ and $A(S')$ are $(\eps,\delta)$-indistinguishable.
\end{definition}
We emphasize that $(\eps, \delta)$-indistinguishability must hold for every such pair of samples, even if they are not generated according to a (realizable) distribution.

The parameters $\eps,\delta$ are usually treated as follows: $\eps$ is a small constant (say $0.1$),  and $\delta$ is negligible, $\delta = n^{-\omega(1)}$, where $n$ is the input sample size. The case of $\delta=0$ is also referred to as {\it pure differential privacy}.
Thus, a class $\cH$ is privately learnable if it is PAC learnable by an algorithm $A$
that is $(\eps(n),\delta(n))$-differentially private with $\eps(n) \leq 0.1$, and~$\delta(n) \leq n^{-\omega(1)} $.


\section{Globally-Stable Learning of Littlestone Classes}\label{sec:LSstable}

\subsection{Theorem Statement}

The following theorem \correction{states that }{ } any class $\cH$ with a bounded Littlestone dimension
	can be learned by a globally-stable algorithm.

\begin{theorem} \label{thm:littlestone-frequent}
Let $\cH$ be a hypothesis class with Littlestone dimension $d\geq 1$, let $\alpha>0$, and set
\[m = \correction{2^{2^{d+2}+1}4^{d+1}}{(8^{d+1}+1)}\cdot\Bigl\lceil\frac{\correction{2^{d+2}}{d}}{\alpha}\Bigr\rceil.\]
Then there exists a randomized algorithm $G : (X \times \{\pm 1\})^m \to \{\pm 1\}^X$ with the following properties. 
Let $\cD$ be a realizable distribution and let $S\sim \cD^m$ be an input sample.
Then there exists a hypothesis $f$ such 
\[\Pr[G(S) = f] \geq \frac{1}{(d+1)2^{\correction{2^d}{d}+1}} \text{ and } \loss_{\cD}(f) \leq \alpha.\]
\end{theorem}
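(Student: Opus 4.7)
The plan is to realize $G$ as a recursive construction around the Standard Optimal Algorithm. Write $\cA$ for the $\soa$ of $\cH$; on every realizable sample $\cA$ makes at most $d$ mistakes, so if we run $\cA$ on $S \sim \cD^n$ and let $M$ be the random number of mistakes made, the pigeonhole principle supplies some level $i^\star \in \{0, \ldots, d\}$ with $\Pr[M = i^\star] \geq 1/(d+1)$. I would construct one sub-algorithm $G_i$ for each level $i \in \{0, \ldots, d\}$ and let the final $G$ first pick $i$ uniformly at random, paying a factor $1/(d+1)$ in stability. It then suffices to show that for the ``correct'' level $i^\star$, the corresponding $G_{i^\star}$ outputs some fixed hypothesis with probability at least $\Omega(1/2^{d+1})$.

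\paragraph{Recursive contest.} I would build $G_i$ by downward induction on $d - i$. Base case $i = d$: conditioned on $\cA$ making $d$ mistakes while processing $S$, the surviving version space has Littlestone dimension $0$, forcing $\cA$'s current hypothesis to equal the target concept $c$. Hence $G_d := \cA$ returns $c$ conditional on $M = d$, and we are done. For the inductive step $i < d$, $G_i$ draws two independent fresh sub-samples $S_1, S_2 \sim \cD^n$, runs the level-$i$ procedure on each to obtain $f_1, f_2$, and branches on the collision probability $\Pr[f_1 = f_2]$. In the large-collision case (say $\geq 1/4$), an averaging argument already pins down a single hypothesis that appears with the required frequency. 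Otherwise $G_i$ picks an $x$ with $f_1(x) \neq f_2(x)$, draws $y \sim \{\pm 1\}$ uniformly, and recursively invokes $G_{i+1}$ on $S_j \circ (x, y)$ where $j$ is the index with $f_j(x) \neq y$. When $y = c(x)$ (probability $1/2$) the augmented sample is realizable and forces $\cA$ to make exactly one extra mistake on it, cleanly reducing to the level-$(i{+}1)$ subproblem.

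\paragraph{From stability to accuracy.} Unwinding the recursion, each level loses a factor of $1/2$ (random label) times a constant contest/collision factor; with depth at most $d$, this gives $G_{i^\star}$ a stability parameter at least $\Omega(1/2^{d+1})$, which combined with the $1/(d+1)$ from the uniform choice of $i$ matches the target $\frac{1}{(d+1)2^{d+1}}$. The total number of fresh sub-samples consumed is a constant power of $2$ per level, absorbing into the $\approx 8^{d+1}$ factor in $m$. To upgrade stability into the loss bound $\loss_\cD(f) \leq \alpha$, I would apply Proposition~\ref{prop:gs}: setting each sub-sample to size $n \approx \lceil d/\alpha \rceil$ yields $\ln(1/\eta)/n \leq \alpha$, provided the algorithm is consistent on its input sample. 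Consistency can be enforced by rejecting any candidate output hypothesis that errs on some point of the full sample; since every hypothesis the construction ever outputs is of the form $\cA(T)$ for a subsample $T$ of $S$ possibly augmented by synthetic witnesses, a union bound over the finitely many such $T$ controls the probability of any inconsistency.

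\paragraph{Main obstacle.} The subtlest step is reconciling the recursive contest with generalization: the witness examples $(x, y)$ appended during contests are \emph{not} iid $\cD$-samples, so arguing that the hypothesis $G$ ultimately returns is consistent with a genuine iid draw of size $\approx d/\alpha$ — the precondition of Proposition~\ref{prop:gs} — requires carefully partitioning the $m$-sample so that enough iid $\cD$-examples remain untouched at each recursion level to certify consistency without re-using examples that influenced the branching decisions. A related delicate point is the large-collision branch: converting $\Pr[f_1 = f_2] \geq \text{const}$ into a lower bound on the frequency of a \emph{single} hypothesis needs a careful second-moment/averaging argument that propagates correctly through every level of the induction.
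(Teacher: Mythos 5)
Your high-level plan matches the paper's approach: pigeonhole over the number of mistakes the $\soa$ makes, random contests that force an extra mistake, and generalization via consistency. However, the quantitative heart of your argument contains a gap that is fatal to the stated bound. You claim ``each level loses a factor of $1/2$ (random label) times a constant contest/collision factor; with depth at most $d$, this gives $G_{i^\star}$ a stability parameter at least $\Omega(1/2^{d+1})$.'' This is not correct. At each recursion level the contest requires drawing \emph{two} independent sub-samples, \emph{both} of which must land on the ``good'' branch of the recursion (both must eventually reach the right mistake level through their own nested contests). Consequently the per-level success probability is not multiplied by a constant --- it is \emph{squared}, up to low-order loss from collisions. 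The correct recursion is of the form $\rho_k \gtrsim \tfrac{1}{2}\rho_{k-1}^2$ (cf.~Lemma~\ref{lem:frequentdist}), which with $\rho_0 = 1$ unwinds to $\rho_d \approx 2^{-(2^d - 1)}$: doubly exponential, not singly exponential. This is precisely why the theorem statement targets $\eta \geq \frac{1}{(d+1)2^{2^d+1}}$ (and why $m$ also scales doubly exponentially in $d$). Your analysis, as written, would only reproduce a claimed $\Omega(2^{-d})$ bound --- which was the erroneous estimate in an earlier draft of this paper, since corrected --- and would not establish the theorem as stated.

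Two secondary issues. First, you propose to enforce consistency by rejecting candidate hypotheses that err on some point of the full sample so that Proposition~\ref{prop:gs} applies; but rejection alters the output distribution and would require re-doing the stability analysis. The paper avoids this by appending a fresh auxiliary block $T\sim\cD^n$ at the very end and using the structural fact (Observation~\ref{obs:mistakebound}) that $\soa(S\circ T)$ is automatically consistent with $T$ (after the small extension of the $\soa$ to non-realizable prefixes in Definition~\ref{def:soaext}); generalization (Lemma~\ref{lem:gen}) then follows directly from a frequent hypothesis having to be consistent with the trailing $n$ i.i.d.\ examples. Second, because the contest tree branches doubly at each level and the collision-and-redraw step can in principle loop, the raw sampling process has unbounded (and a priori possibly infinite) sample draws; the paper handles this with a Monte-Carlo truncation $\tilde\cD_k$ whose failure probability is bounded by Markov using the expected-sample-complexity bound of Lemma~\ref{lem:avgsample}, and this truncation threshold is where the additional $2^{2^{d+2}+1}$ factor in $m$ enters.
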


\subsection{The distributions $\cD_k$}

The Algorithm $G$ is obtained by running the $\soa$ on a sample drawn from a carefully tailored distribution.
	This distribution belongs to a family of distributions which we define next. Each of these distributions can be sampled from using black-box access to i.i.d.\ samples from $\cD$.
	Recall that for a pair of samples $S,T$, we denote by $S\circ T$ the sample obtained by appending~$T$ to the end of $S$.
	Define a sequence of distributions $\cD_k$ for $k\geq 0$ as follows:

\begin{tcolorbox}
\begin{center}
{\bf Distributions $\cD_k$}\\
\end{center}
\noindent
Let $n$ denote an ``auxiliary sample'' size (to be fixed later) and let $\cD$ denote the target realizable distribution over examples.
The distributions $\cD_k = \cD_k(\cD,n)$ are defined by induction on $k$ as follows:
\begin{enumerate}
\item $\cD_0$: output the empty sample $\emptyset$ with probability 1.
\item Let $ k\ge 1 $. If there exists a $f$ such that 
\[\Pr_{S \sim \cD_{k-1}, T\sim\cD^n}[\soa(S\circ T) = f] \geq \correction{2^{-2^{d+2}}}{2^{-d}},\]
or if $\cD_{k-1}$ is undefined then $\cD_k$ is undefined.
\item Else, $\cD_k$ is defined recursively by the following process:
	\begin{itemize}
	\item[(i)] Draw $S_0,S_1\sim \cD_{k-1}$ and $T_0,T_1\sim\cD^n$ independently.
	\item[(ii)] Let $f_0=\soa(S_0\circ T_0)$, $f_1=\soa(S_1\circ T_1)$. 
	\item[(iii)] If $f_0=f_1$ then go back to step (i).
	\item[(iv)] Else, pick $x\in \{x: f_0(x)\neq f_1(x)\}$ and sample $y\sim\{\pm 1\}$ uniformly.
	\item[(v)] If $f_0(x)\neq y$ then output $S_0 \circ T_0\circ \bigl((x,y)\bigr)$ and else output $S_1\circ T_1\circ \bigl((x,y)\bigr)$.
	\end{itemize}
\end{enumerate}
\end{tcolorbox}

Please see Figure~\ref{fig:Dk} for an illustration of sampling $S\sim \cD_k$ for $k=3$.

We next observe some basic facts regarding these distributions.
	First, note that whenever~$\cD_k$ is well-defined, the process in Item 3 terminates with probability~1.

Let $k$ be such that $\cD_k$ is well-defined and consider a sample $S$ drawn from $\cD_k$.  
	The size of $S$ is~$\lvert S\rvert = k\cdot(n + 1)$. 
	Among these $k\cdot(n+1)$ examples there are~$k\cdot n$ examples drawn from~$\cD$ 
	and~$k$ examples which are generated in Item 3(iv).
	We will refer to these~$k$ examples as \underline{\it tournament examples}.
	Note that during the generation of $S\sim \cD_k$ there are examples drawn from $\cD$ which do not actually appear in $S$. 
	In fact, the number of such examples may be unbounded, depending on how many times Items~3(i)-3(iii) were repeated. 
	In Section~\ref{sec:monte} we will define a ``Monte-Carlo'' variant of~$\cD_k$ in which the number of examples drawn from $\cD$
	is always bounded. This Monte-Carlo variant is what we actually use to define our globally-stable learning algorithm, but we introduce the simpler distributions $\cD_k$ to clarify our analysis.

The $k$ tournament examples satisfy the following important properties.
\begin{observation}\label{obs:mistakebound}
Let $k$ be such that $\cD_k$ is well-defined and consider running the $\soa$ on the concatenated sample $S\circ T$, 
where~$S\sim \cD_k$ and $T\sim \cD^n$. Then
\begin{enumerate}
\item Each tournament example forces a mistake on the $\soa$.
Consequently, the number of mistakes made by the $\soa$ when run on $S\circ T$
is at least $k$.
\item $\soa(S\circ T)$ is consistent with $T$.
\end{enumerate}
\end{observation}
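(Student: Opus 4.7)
The plan is to prove Part~1 by induction on $k$ and Part~2 by a direct case analysis on whether the combined sample $S \circ T$ is realizable by some $h \in \cH$.

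For Part~1, the base case $k=0$ is vacuous since $S$ has no tournament examples. For the inductive step, write $S = S_i \circ T_i \circ (x,y)$ as in step~3 of the generating process, so $S$ contains $k-1$ tournament examples inside $S_i$ and one new tournament example $(x,y)$ appended at the end. The earlier $k-1$ examples still force mistakes when $\soa$ processes the longer sample $S \circ T$, because $\soa$'s behavior on any prefix is determined entirely by that prefix, and by induction these examples force mistakes when $\soa$ runs on $S_i \circ T_i$. For the new tournament example $(x,y)$, the $\soa$'s prediction rule at the moment it reaches this position is exactly $\soa(S_i \circ T_i) = f_i$, which by the construction in step~3(v) satisfies $f_i(x) \neq y$; this accounts for the $k$th mistake.

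For Part~2 I would split on whether $S \circ T$ is realizable by some $h \in \cH$. If it is, then every update is the standard SOA rule, the maintained version-space $\cH_t$ contains $h$ throughout (hence is nonempty), and the output hypothesis --- defined pointwise by $\argmax_b \Ldim(\cH_t^b(\cdot))$ --- must agree with every seen example, and in particular with $T$. If $S \circ T$ is not realizable, let $\tau$ be the largest index for which the prefix $(x_1,y_1),\ldots,(x_\tau,y_\tau)$ is realizable; then every update at times $\tau+1, \tau+2, \ldots$ is the local update from \Cref{def:soaext}. The key leverage here is that $T$ is itself realizable by the target concept $c$ of $\cD$, so any two examples in $T$ sharing an $x$-value also share a $y$-value. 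Combined with the fact that a local update modifies $h$ at only a single point, this ensures that the hypothesis remains consistent with every already-processed example of $T$ and hence, at termination, with all of $T$.

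The main technical obstacle will be the bookkeeping in Part~2 when the realizable-to-local transition index $\tau$ lies \emph{inside} $T$ rather than inside $S$. In this sub-case one must verify that the last realizable-mode hypothesis $h_{\tau+1}$ --- which by the standard SOA analysis is consistent with the prefix of $T$ already processed --- is not damaged by the subsequent local updates applied to the remaining suffix of $T$. This reduces once again to the realizability of $T$ by $c$: any later local update whose modification point coincides with an earlier $x$-coordinate in $T$ must, by realizability, overwrite with the same label, preserving correctness.
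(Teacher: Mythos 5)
Your proof is correct and follows the same high-level approach as the paper's (deliberately terse) argument: Part 1 follows from the construction of the tournament examples in steps 3(iv)--(v), and Part 2 from splitting on realizability of $S \circ T$ and invoking the extended update rule of \Cref{def:soaext}. You fill in details the paper glosses over, in particular correctly observing that the local-update branch preserves consistency with $T$ only because $T$ is realizable by the target concept $c$, so that repeated $x$-values in $T$ cannot carry conflicting labels and later local updates can never ``undo'' an earlier correct label on a point of $T$.
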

The first item follows directly from the definition of $x$ in Item 3(iv) and the definition of $S$ in Item 3(v).
The second item clearly holds when $S\circ T$ is realizable by $\cH$ 
(because the $\soa$ is consistent).
For non-realizable $S\circ T$, Item 2 holds by our extension of the $\soa$ in Definition~\ref{def:soaext}.

\begin{figure}
\centering
\includegraphics[scale=0.3]{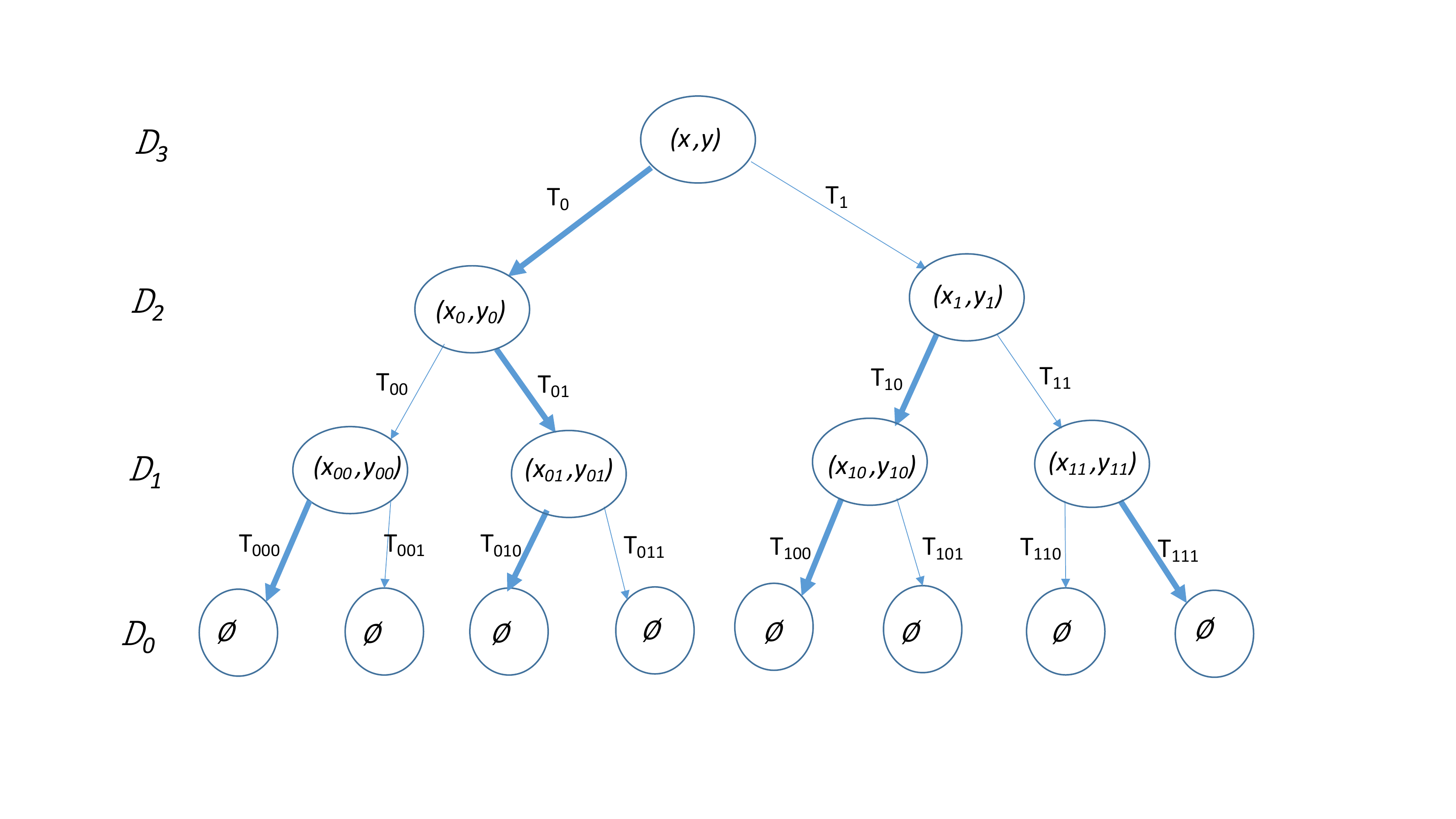}
\caption{\small{An illustration of the process of generating a sample $S\sim \cD_3$. 
The edge labels are the samples $T_b$ drawn in Item~3(i).
The node labels are the tournament examples $(x_b,y_b)$ generated in Item 3(iv). 
The bold edges indicate which of the samples $T_{b0},T_{b1}$ was appended to $S$  in step 3(v) along with the corresponding tournament example.
The sample $S$ generated in this illustration is $T_{010}\circ (x_{01},y_{01})\circ T_{01}\circ (x_{0},y_{0})\circ T_{0}\circ(x,y)$.
}}\label{fig:Dk}
\end{figure}

\subsubsection{The Existence of Frequent Hypotheses}
The following lemma is the main step in establishing global stability.
\begin{lemma}\label{lem:frequentdist}
There exists $k\leq d$ and an hypothesis $f:X\to\{\pm 1\}$ such that 
\[\Pr_{S\sim \cD_k, T\sim\cD^n}[\soa(S\circ T) = f] \geq \correction{2^{-2^{d+2}}}{2^{-d}}.\]
\end{lemma}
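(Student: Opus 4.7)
The plan is to case-split on whether $\cD_d$ itself is well-defined, and to show that if it is, then the target concept $c\in\cH$ (which exists because $\cD$ is realizable) is the desired frequent hypothesis at level $k=d$.

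First, suppose $\cD_d$ is \emph{not} well-defined. Then by Item~2 of the definition of $\cD_k$ there is a smallest index $k^* \leq d$ such that $\cD_{k^*}$ is well-defined while $\cD_{k^*+1}$ is not; the very reason $\cD_{k^*+1}$ fails to be defined is the existence of a hypothesis $f$ with $\Pr_{S\sim\cD_{k^*},\,T\sim\cD^n}[\soa(S\circ T)=f]\geq 2^{-d}$, which is precisely the lemma's conclusion (with $k=k^*\leq d$). Since $\cD_0$ is always well-defined, this $k^*$ always exists in this case.

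Next, suppose $\cD_d$ \emph{is} well-defined. I will show that taking $f=c$ works. Let $S\sim \cD_d$ contain the $d$ tournament examples $(x_1,y_1),\ldots,(x_d,y_d)$ produced by the recursive construction, and let $E$ be the event $\{y_i = c(x_i) \text{ for all } i=1,\ldots,d\}$. Each $y_i$ is a freshly drawn uniform $\{\pm 1\}$ coin in Item~3(iv), independent of all other randomness used to generate $S$; hence $\Pr[E]=2^{-d}$. Under $E$, every tournament label agrees with $c$, and every $\cD$-sampled example in $S$ is realizable by $c$ since $\cD$ is realizable. Therefore $S\circ T$ is realizable by $c$ for any $T\sim \cD^n$.

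To finish, I invoke Observation~\ref{obs:mistakebound}: on $S\circ T$ the $\soa$ is forced to err on each of the $d$ tournament examples, so it makes at least $d$ mistakes. On the other hand, since $S\circ T$ is realizable by a hypothesis in $\cH$ and $\Ldim(\cH)=d$, the standard $\soa$ mistake bound ensures at most $d$ mistakes; thus exactly $d$. The invariant that each mistake reduces the Littlestone dimension of the maintained version space by at least $1$ now forces $\Ldim(\cH_{\mathrm{final}})\leq 0$, i.e.\ $|\cH_{\mathrm{final}}|\leq 1$. Since $c$ is consistent with $S\circ T$, we have $c\in\cH_{\mathrm{final}}$, so $\cH_{\mathrm{final}}=\{c\}$ and therefore $\soa(S\circ T)=c$ whenever $E$ holds. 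Combining, $\Pr[\soa(S\circ T)=c]\geq \Pr[E]=2^{-d}$, as required. The main conceptual step (and the only one that requires care) is verifying that the uniform tournament coins $y_i$ are genuinely independent of all earlier randomness, so that the event $E$ has probability exactly $2^{-d}$; the rest is a clean application of the $\soa$ mistake bound to the realizable sample $S\circ T$.
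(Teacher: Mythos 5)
Your overall structure matches the paper's: you correctly reduce to the case that $\cD_d$ is well-defined, identify the event that all retained tournament labels agree with the target $c$, and invoke \cref{obs:mistakebound} plus the $\soa$ mistake bound to conclude $\soa(S\circ T)=c$ on that event. The first case ($\cD_d$ undefined) is also handled correctly.

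However, the key probability calculation has a genuine gap, and it is precisely the error that the paper's footnote flags as having been corrected. You claim $\Pr[E]=2^{-d}$ because ``each $y_i$ is a freshly drawn uniform coin, independent of all other randomness.'' The coins are indeed fresh and uniform, but the event $E$ is not the conjunction of $d$ pre-specified coin outcomes. At each level $k$ of the recursion, the algorithm draws two samples $S_0,S_1\sim\cD_{k-1}$ (each carrying its own tournament examples), then draws $y$, and then keeps only one of $S_0$ or $S_1$ depending on whether $f_0(x)\ne y$ or $f_1(x)\ne y$. So \emph{which} lower-level tournament examples end up inside the final $S$ depends on $y$: conditioned on $y=c(x)$, the retained branch is the one whose hypothesis $f_b$ disagrees with $c$ at $x$, and the event ``$f_b$ disagrees with $c$ at $x$'' is not independent of the event ``all tournament labels in $S_b$ agree with $c$.'' Hence the naive $\prod_i \Pr[y_i=c(x_i)]=2^{-d}$ computation does not go through. (You also silently ignore the conditioning introduced by the rejection loop in step 3(iii), though that is the smaller of the two issues.)

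The paper's corrected proof circumvents exactly this: it defines $\rho_k$ as the probability that \emph{all} tournament examples in $S\sim\cD_k$ agree with $c$, and lower-bounds it via a recurrence that requires \emph{both} $S_0$ and $S_1$ to have consistent tournament examples (since one cannot know in advance which branch will be retained), namely $\rho_k\ge\tfrac12\left(\rho_{k-1}^2-8\cdot 2^{-2^{d+2}}\right)$, where the subtracted term handles the conditioning on $f_0\ne f_1$. The squaring in $\rho_{k-1}^2$ is what produces the double-exponential bound $\rho_d\ge 4\cdot 2^{-2^{d+1}}\ge 2^{-2^{d+2}}$ in the statement, rather than the single-exponential $2^{-d}$ your argument targets. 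To repair your proof you would need to replace the independence claim by this two-sided consistency argument (or supply some other correlation bound), and you should not expect to recover $2^{-d}$ without a genuinely new idea.
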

\begin{proof}
Suppose for the sake of contradiction that this is not the case.
In particular, this means that $\cD_d$ is well-defined and that for every $f$:

\begin{equation}\label{eq:3}
\Pr_{S\sim \cD_d, T\sim\cD^n}[\soa(S\circ T) = f] < \correction{2^{-2^{d+2}}}{2^{-d}}.
\end{equation}

We show that this cannot be the case when $f=c$ is the target concept (i.e., for $c\in\cH$ which satisfies~$\loss_\cD(c)=0$).
Towards this end, \correction{we first show }{note} that with probability $\correction{2^{-2^{d+2}}}{2^{-d}}$ over $S\sim \cD_d$
we have that all $d$ tournament examples are consistent with~$c$:
\correction{for $k\leq d$ let $\rho_k$ denote the probability that all $k$ tournament examples in $S\sim \cD_k$ are consistent with  $c$.	
	We claim that $\rho_k$ satisfies the recursion $\rho_k\ge \frac{1}{2}(\rho_{k-1}^2-8\cdot2^{-2^{d+2}})$.
	Indeed, consider the event $E_k$ that (i) in each of $S_0,S_1\sim \cD_{k-1}$, all $k-1$ 
	tournament examples are consistent with $c$, and (ii) that $f_0\ne f_1$. 
	Since $f_0=f_1$ occurs with probability at most $2^{-2^{d+2}}<8\cdot2^{-2^{d+2}}$, 
	it follows that $\Pr[E_k]\geq \rho_{k-1}^2- 8\cdot 2^{-2^{d+2}}$. 
	Further, since $y\in \{\pm 1\}$ is chosen uniformly at random and independently of $S_0$ and $S_1$,
	we have that conditioned on $E_k$, $c(x)=y$ with probability~$1/2$. 
	Taken together we have that $\rho_k\geq \frac{1}{2}\Pr[E_k]\geq \frac{1}{2}\left(\rho_{k-1}^2-8\cdot2^{-2^{d+2}}\right)$. 
	Since $\rho_0=1$ we get the recursive relation 
	\[\rho_{k}\ge\frac{\rho_{k-1}^2-8\cdot 2^{-2^{d+2}}}{2}, ~\textrm{and}~ \rho_0=1.\] 
	Thus, it follows by induction that for $k\le d$, $\rho_k\ge 4\cdot2^{-2^{k+1}}$: 
	the base case is verified readily,
	and the induction step is as follows:
	\begin{align*}
	\rho_{k} &\geq \frac{\rho_{k-1}^2-8\cdot2^{-2^{d+2}}}{2}\\
		     &\geq \frac{(4\cdot2^{-2^{k}})^2-8\cdot 2^{-2^{d+2}}}{2} \tag{by induction}\\
		     &=8\cdot 2^{-2^{k+1}} - 4\cdot 2^{-2^{d+2}}\\
		     &\geq 4\cdot 2^{-2^{k+1}}  \tag{$k\leq d$ and therefore $2^{-2^{d+2}} \leq 2^{-2^{k+1}}$}
	\end{align*}  

}{Indeed, this follows since in each tournament example~$(x_i,y_i)$, 
the label $y_i$ is drawn independently of $x_i$ and of the sample constructed thus far.
So, $y_i=c(x_i)$ with probability $1/2$ independently for each tournament example.}

Therefore, with probability $\correction{2^{-2^{d+2}}}{2^{-d}}$ we have that $S\circ T$ is consistent with $c$
(because all examples in $S\circ T$ which are drawn from $\cD$ are also consistent with $c$).
Now, since each tournament example forces a mistake on the $\soa$ (Observation~\ref{obs:mistakebound}),
and since the $\soa$ does not make more than~$d$ mistakes on realizable samples,  
it follows that if all tournament examples in $S\sim \cD_d$ are consistent with $c$ then $\soa(S)=\soa(S\circ T)=c$. 
Thus, 

\[\Pr_{S\sim \cD_d, T\sim\cD^n}[\soa(S\circ T) = c] \geq \correction{2^{-2^{d+2}}}{2^{-d}},\]
which contradicts Equation~\ref{eq:3} and finishes the proof.
\end{proof}

\subsubsection{Generalization}

The next lemma shows that only hypotheses $f$ that generalize well satisfy the conclusion of Lemma~\ref{lem:frequentdist}
(note the similarity of this proof with the proof of Proposition~\ref{prop:gs}):

\begin{lemma}[Generalization]\label{lem:gen}
Let $k$ be such that $\cD_k$ is well-defined. Then every~$f$ such that
\[\Pr_{S\sim \cD_k, T\sim\cD^n}[\soa(S\circ T) = f] \geq \correction{2^{-2^{d+2}}}{2^{-d}}\] 
satisfies $\loss_{\cD}(f) \le \frac{\correction{2^{d+2}}{d}}{ n}$.
\end{lemma}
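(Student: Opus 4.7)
My plan is to mimic the generalization argument in the proof of Proposition~\ref{prop:gs}, exploiting the fact that the tail $T \sim \cD^n$ is independent of $S \sim \cD_k$ and that $\soa(S\circ T)$ is forced to agree with $T$ on every example.

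The key structural observation to invoke is Observation~\ref{obs:mistakebound}(2): whenever $\cD_k$ is well-defined and $S\sim \cD_k$, $T\sim \cD^n$, the hypothesis $\soa(S\circ T)$ is consistent with $T$. Consequently, the event $\{\soa(S\circ T) = f\}$ is contained in the event $\{f \text{ is consistent with } T\}$, regardless of the value of $S$.

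Now I would set $\alpha := \loss_\cD(f)$ and estimate the containing event. Since $T$ is drawn i.i.d.\ from $\cD$ independently of $S$, and each example in $T$ is labeled incorrectly by $f$ with probability exactly $\alpha$, we obtain
\[
\Pr_{S\sim \cD_k,\, T\sim \cD^n}[\soa(S\circ T)=f] \;\le\; \Pr_{T\sim\cD^n}[f \text{ consistent with } T] \;=\; (1-\alpha)^n \;\le\; e^{-\alpha n}.
\]
Combining with the lower bound hypothesis $e^{-\alpha n} \ge 2^{-2^{d+2}}$ and taking logarithms gives $\alpha n \le 2^{d+2}\ln 2 \le 2^{d+2}$, i.e.\ $\loss_\cD(f) \le 2^{d+2}/n$, which is the desired conclusion.

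There is no real obstacle here; the only thing to be careful about is that $T$ is genuinely an independent fresh $\cD^n$-sample appearing as a suffix of the $\soa$'s input, so its consistency with $f$ is a product of $n$ independent Bernoulli events with parameter $1-\alpha$. This independence is what lets us upgrade the stability hypothesis into a generalization bound in precisely the same ``Occam's razor'' style used in Proposition~\ref{prop:gs}, with the role of the hypothesis-class cardinality replaced by the inverse stability parameter $2^{2^{d+2}}$.
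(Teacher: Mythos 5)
Your proof is correct and takes essentially the same approach as the paper: both invoke Observation~\ref{obs:mistakebound}(2) to establish the event containment $\{\soa(S\circ T)=f\}\subseteq\{f\text{ consistent with }T\}$, and then compare the resulting probability bound $(1-\alpha)^n$ against the stability hypothesis in the ``Occam's razor'' style of Proposition~\ref{prop:gs}.
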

 \begin{proof}
Let  $f$ be a hypothesis such that $\Pr_{S\sim \cD_k, T \sim \cD^n}[\soa(S \circ T) = f] \geq \correction{2^{-2^{d+2}}}{2^{-d}}$
and let $\alpha=\loss_{\cD}(h)$.
We will argue that 
\begin{equation}\label{eq:4}
\correction{2^{-2^{d+2}}}{2^{-d}} \leq (1-\alpha)^{n}.
\end{equation}
Define the events $A,B$ as follows.
\begin{enumerate}
\item $A$ is the event that $\soa(S\circ T) = f$. By assumption, $\Pr[A] \geq \correction{2^{-2^{d+2}}}{2^{-d}}$.
\item $B$ is the event that $f$ is consistent with $T$. Since $\lvert T\rvert = n$, we have that $\Pr[B] = (1-\alpha)^{n}$.
\end{enumerate}
Note that~$A \subseteq B$:
Indeed, $\soa(S\circ T)$ is consistent with $T$ by  the second item of Observation~\ref{obs:mistakebound}.  
	Thus, whenever $\soa(S\circ T)=f$, it must be the case that $f$ is consistent with $T$.
	Hence,~$\Pr[A]\leq \Pr[B]$, which implies Inequality~\ref{eq:4} and finishes the proof
	(using the fact that $1-\alpha \leq 2^{-\alpha}$ and taking logarithms on both sides).
\end{proof}

\subsection{The Algorithm $G$}

\subsubsection{A Monte-Carlo Variant of $\cD_k$}\label{sec:monte}

Consider the following first attempt of defining a globally-stable learner $G$:
	(i) draw $i\in\{0\ldots d\}$ uniformly at random, (ii) sample $S\sim\cD_i$,
	and (iii) output $\soa(S\circ T)$, where~$T\sim \cD^n$.
	The idea is that with probability $1/(d+1)$ the sampled $i$ will be equal to a number $k$ satisfying the conditions of Lemma~\ref{lem:frequentdist},
	and so the desired hypothesis $f$ guaranteed by this lemma (which also has low population loss by Lemma~\ref{lem:gen}) will be outputted with probability at least $\correction{2^{-2^d}}{2^{-d}}/(d+1)$.

The issue here is that sampling~$f\sim \cD_i$ may require an unbounded number of samples from the target distribution~$\cD$
	(in fact, $\cD_i$ may even be undefined). 	To circumvent this possibility, 
	we define a Monte-Carlo variant of $\cD_k$ in which the number of examples drawn from~$\cD$ is always bounded.

\begin{tcolorbox}
\begin{center}
{\bf The Distributions $\tilde \cD_k$ (a Monte-Carlo variant of $\cD_k$)}\\
\end{center}
\begin{enumerate}
\item 
Let $n$ be the auxiliary sample size and $N$ be an upper bound on the number of examples drawn from $\cD$.
\item $\tilde \cD_0$: output the empty sample $\emptyset$ with probability 1.
\item For $k> 0$, define $\tilde \cD_k$ recursively by the following process:
	\begin{itemize}
	\item[(*)] {\bf Throughout the process, if more than $N$ examples from $\cD$ are drawn 
	(including examples drawn in the recursive calls), then output ``Fail''.}
	\item[(i)] Draw $S_0,S_1\sim \tilde \cD_{k-1}$ and $T_0,T_1\sim\cD^n$ independently. 
	\item[(ii)] Let $f_0=\soa(S_0\circ T_0)$, $f_1=\soa(S_1\circ T_1)$. 
	\item[(iii)] If $f_0=f_1$ then go back to step (i).
	\item[(iv)] Else, pick $x\in \{x: f_0(x)\neq f_1(x)\}$ and sample $y\sim\{\pm 1\}$ uniformly.
	\item[(v)] If $f_0(x)\neq y$ then output $S_0 \circ T_0\circ \bigl((x,y)\bigr)$ and else output $S_1\circ T_1\circ \bigl((x,y)\bigr)$.
	\end{itemize}
\end{enumerate}
\end{tcolorbox}

Note that $\tilde \cD_k$ is well-defined for every $k$, even for $k$ such that $\cD_k$ is undefined
	(however, for such~$k$'s the probability of outputting ``Fail'' may be large).

It remains to specify the upper bound $N$ on the number of examples drawn from $\cD$ in $\tilde \cD_k$. 
Towards this end, we prove the following bound on the expected number of examples from $\cD$
that are drawn during generating $S\sim\cD_k$:
\begin{lemma}[Expected Sample Complexity of Sampling From $\cD_k$]\label{lem:avgsample}
Let $k$ be such that $\cD_k$ is well-defined, and let $M_k$ denote the number of examples from~$\cD$
that are drawn in the process of generating $S\sim \cD_k$. Then,
\[ \mathbb{E}[M_k] \leq 4^{k+1}\cdot n.\]
\end{lemma}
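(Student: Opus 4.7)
The plan is to establish the recursion $\mathbb{E}[M_k] \leq 4\,\mathbb{E}[M_{k-1}] + 4n$ for every $k$ for which $\cD_k$ is well-defined, and then unroll it against the base case $\mathbb{E}[M_0]=0$ to obtain the stated bound. The two inputs to this recursion are a bound on the expected number of times the repeat-loop in steps~3(i)--3(iii) iterates before producing distinct hypotheses $f_0 \ne f_1$, together with a bound on the expected cost (in examples drawn from $\cD$) incurred by a single iteration.

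For the loop bound, I will use the hypothesis that $\cD_k$ is well-defined, which by the definition means that the guard in Item~2 fails at level $k$; hence every hypothesis $f$ satisfies $p_f := \Pr_{S \sim \cD_{k-1},\, T \sim \cD^n}[\soa(S \circ T) = f] < 2^{-2^{d+2}}$. Since the draws $(S_0,T_0)$ and $(S_1,T_1)$ are i.i.d., the collision probability factors as
\[\Pr[f_0 = f_1] = \sum_f p_f^{\,2} \leq \Bigl(\max_f p_f\Bigr)\cdot \sum_f p_f \leq 2^{-2^{d+2}} \leq \tfrac{1}{2}.\]
Each iteration therefore succeeds (produces $f_0 \ne f_1$) with probability at least $1/2$, so the expected number of iterations of the repeat-loop is at most $2$.

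For the per-iteration cost, I observe that one execution of steps~3(i)--3(iii) consumes two independent copies of $M_{k-1}$ examples (from $S_0,S_1 \sim \cD_{k-1}$) plus exactly $2n$ examples drawn from $\cD$ (from $T_0,T_1 \sim \cD^n$); the post-loop steps~3(iv)--3(v) only flip the coin $y$ and draw no further examples from $\cD$. Successive iterations are mutually independent, and whether iteration $t$ is the stopping iteration is determined entirely by the $(S_0,T_0,S_1,T_1)$ of that iteration, so a standard application of Wald's identity (or, equivalently, summing $\sum_{t \geq 1} \Pr[\text{loop reaches iteration } t]\cdot \mathbb{E}[\text{cost of iteration } t]$) gives
\[\mathbb{E}[M_k] \;\leq\; 2 \cdot \bigl(2\,\mathbb{E}[M_{k-1}] + 2n\bigr) \;=\; 4\,\mathbb{E}[M_{k-1}] + 4n.\]
Combined with $\mathbb{E}[M_0] = 0$, unrolling yields $\mathbb{E}[M_k] \leq 4n \sum_{j=0}^{k-1} 4^j = \tfrac{4(4^k - 1)}{3}\,n \leq 4^{k+1}n$, which is the desired bound.

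The only nontrivial point is the Wald-style argument, since the trial costs and the stopping time involve the same underlying randomness; I would address this by noting that the indicator of $\{f_0 = f_1\}$ within a given trial is a deterministic function of only that trial's random variables, so conditioning on reaching the $t$-th trial leaves the expected cost of that trial unchanged. Everything else is the routine solution of a linear first-order recurrence.
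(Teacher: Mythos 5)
Your proof is correct and follows essentially the same approach as the paper: establish the recursion $\mathbb{E}[M_k] \le 4\,\mathbb{E}[M_{k-1}] + 4n$ by bounding the loop's success probability via the well-definedness guard (so the expected number of iterations is at most $2$) and the per-iteration cost by $2\mathbb{E}[M_{k-1}]+2n$, then unroll against $\mathbb{E}[M_0]=0$. The only cosmetic difference is that where the paper explicitly decomposes the cost as $\sum_{j\ge1}M_{k}^{(j)}$ and sums the resulting geometric series, you invoke Wald's identity (correctly noting that the event of reaching trial $t$ is determined by earlier trials and hence independent of the cost of trial $t$, which is exactly the independence the paper uses implicitly).
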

\begin{proof}
Note that $\mathbb{E}[M_0]=0$ as $\cD_0$ deterministically produces the empty sample. 
We first show that for all $0 < i < k$,
\begin{equation}\label{eq:1}
\mathbb{E}[M_{i+1}] \leq 4\mathbb{E}[M_{i}] + 4n,
\end{equation}
and then conclude the desired inequality by induction.

To see why Inequality~\ref{eq:1} holds, let the random variable $R$ denote the number of times Item~3(i) was executed during the 
generation of~$S\sim \cD_{i+1}$. That is, $R$ is the number of times a pair~$S_0,S_1\sim \cD_i$ and a pair $T_0,T_1\sim \cD^n$ were drawn.
Observe that~$R$ is distributed geometrically with success probability $\theta$, where:
\begin{align*}
\theta &= 1 - \Pr_{S_0,S_1, T_0,T_1}\bigl[\soa(S_0\circ T_0) = \soa(S_1\circ T_1)\bigr] \\
	  &= 1 - \sum_{\correction{h}{f}}\Pr_{S, T}\bigl[\soa(S\circ T) = \correction{h}{f}\bigr]^2\\
	  &\geq 1-\correction{2^{-2^{d+2}}}{2^{-d+2}},
\end{align*}
where the last inequality follows because $i< k$ and hence $\cD_i$ is well-defined,
which implies that~$\Pr_{S, T}\bigl[\soa(S\circ T) = \correction{h}{f}\bigr]\leq \correction{2^{-2^{d+2}}}{2^{-d}}$ for all $h$.

Now, the random variable $M_{i+1}$ can be expressed as follows:
\[M_{i+1} = \sum_{j=1}^\infty M_{i+1}^{(j)},\]
where 
\[
M_{i+1}^{(j)} = 
\begin{cases}
0	&\text{if } R < j,\\
\text{$\#$ of examples drawn from $\cD$ in the $j$'th execution of Item 3(i)} &\text{if } R\geq j.
\end{cases}
\]
Thus, $\mathbb{E}[M_{i+1}] = \sum_{j=1}^\infty\mathbb{E}[M_{i+1}^{(j)}]$.
We claim that
\[
\mathbb{E}[M_{i+1}^{(j)}] = (1-\theta)^{j-1}\cdot (2\mathbb{E}[M_i] + 2n).
\]
Indeed, the probability that $R\geq j$ is $(1-\theta)^{j-1}$ and conditioned on $R\geq j$, 
in the $j$'th execution of Item 3(i) two samples from~$\cD_{i}$ are drawn and two samples from $\cD^n$ are drawn.
Thus
\[\mathbb{E}[M_{i+1}] = \sum_{j=1}^\infty(1-\theta)^{j-1}\cdot (2\mathbb{E}[M_i] + 2n)= \frac{1}{\theta}\cdot (2\mathbb{E}[M_i] + 2n) \leq 4\mathbb{E}[M_i] + 4n,\]
where the last inequality is true because $\theta \geq 1- \correction{2^{-2^{d+2}}}{2^{-d+2}}\geq 1/2$.

This gives Inequality~\ref{eq:1}.
Next, using that~$\mathbb{E}[M_0]=0$, a simple induction gives 
\[\mathbb{E}[M_{i+1}]\leq (4+4^2+\ldots+ 4^{i+1})n \leq 4^{i+2}n,\] 
and the lemma follows by taking $i+1=k$.
\end{proof}

\subsubsection{Completing the Proof of Theorem~\ref{thm:littlestone-frequent}}

\begin{proof}[Proof of Theorem~\ref{thm:littlestone-frequent}]
Our globally-stable learning algorithm $G$ is defined as follows.

\begin{tcolorbox}
\begin{center}
{\bf Algorithm $G$}\\
\end{center}
\begin{enumerate}
\item Consider the distribution $\tilde \cD_k$, where the auxiliary sample size is set to $n=\lceil \frac{\correction{2^{d+2}}{d}}{\alpha}\rceil$
and the sample complexity upper bound is set to $N=\correction{2^{2^{d+2}+1}4^{d+1}}{8^{d+1}}\cdot n$.
\item Draw $k\in \{0,1,\ldots, d\}$ uniformly at random.
\item Output $h=\soa(S\circ T)$, where $T\sim \cD^n$ and $S\sim \tilde \cD_k$.
\end{enumerate}
\end{tcolorbox}
First note that the sample complexity of $G$ is $\lvert S\rvert + \lvert T\rvert \leq N+n = (\correction{2^{2^{d+2}+1}4^{d+1}}{8^{d+1}}+1)\cdot\lceil\frac{\correction{2^{d+2}}{d}}{\alpha}\rceil$, as required.
It remains to show that there exists a hypothesis $f$ such that:
\[\Pr[G(S) = f] \geq \frac{\correction{2^{-2^{d+2}}}{2^{-(d+1)}}}{d+1} \text{ and } \loss_{\cD}(f) \leq \alpha.\]
By Lemma~\ref{lem:frequentdist}, there exists $k^*\leq d$ and $f^*$ such that 
\[\Pr_{S\sim \cD_{k^*}, T\sim\cD^n}[\soa(S\circ T) = f^*] \geq \correction{2^{-2^{d+2}}}{2^{-d}}.\]
\correction{We assume $k^*$ is minimal, in particular, $\cD_k$ is well defined for $k\le k^*$}{}. By Lemma~\ref{lem:gen},
\[\loss_{\cD}(f^*) \leq \frac{\correction{2^{d+2}}{d}}{n} \leq \alpha.\]
We claim that $G$ outputs $f^*$ with probability at least $\correction{2^{-2^{d+2}-1}}{2^{-(d+1)}}$.
To see this, let $M_{k^*}$ denote the number of examples drawn from $\cD$ during the generation of $S\sim \cD_{k^*}$.
Lemma~\ref{lem:avgsample} and an application of Markov's inequality yield
\begin{align*}
\Pr\bigl[M_{k^*} > \correction{2^{2^{d+2}+1}\cdot 4^{d+1}}{8^{d+1}}\cdot n\bigr] &\leq \Pr\bigl[M_{k^*} > \correction{2^{2^{d+2}+1}}{2^{d+1}}\cdot 4^{k^*+1}\cdot n\bigr]  \tag{because $k^*\leq d$}\\
							&\leq \correction{2^{-2^{d+2}-1}}{2^{-(d+1)}}. \tag{by Markov's inequality, since $\mathbb{E}[M_{k^*}]\leq 4^{k^*+1}\cdot n$}
\end{align*}
Therefore,
\begin{align*}
\Pr_{S\sim \tilde \cD_{k^*}, T\sim \cD^n}[\soa(S\circ T) = f^*] &= \Pr_{S\sim \cD_{k^*}, T\sim \cD^n}[\soa(S\circ T) = f^*  \text{ and } M_{k^*} \leq \correction{2^{2^d+2}4^{d+1}}{8^{d+1}}\cdot n] \\
											&\geq \correction{2^{-2^{d+2}}}{2^{-d}} - \correction{2^{-2^{d+2}-1}}{2^{-(d+1)}} \correction{= 2^{-2^{d}-1}}{=2^{-(d+1)}}.
\end{align*}
Thus, since $k=k^*$ with probability $1/(d+1)$, it follows that
$G$ outputs $f^*$ with probability at least $\frac{\correction{2^{-2^{d+2}-1}}{2^{-(d+1)}}}{d+1}$ as required.
\end{proof}

\section{Globally-Stable Learning Implies Private Learning}\label{sec:stableprivate}

In this section we prove that any globally-stable learning algorithm yields a differentially-private learning algorithm with finite sample complexity.

\subsection{Tools from Differential Privacy}

We begin by stating a few standard tools from the differential privacy literature which underlie our construction of a learning algorithm.

Let $X$ be a data domain and let $S \in X^n$. For an element $x \in X$, define $\freq_S(x) = \frac{1}{n} \cdot \#\{i \in [n] : x_i = x\}$, i.e., the fraction of the elements in $S$ which are equal to $x$.

\begin{lemma}[Stable Histograms~\cite{KorolovaKMN09, BunNS16}] \label{lem:histograms}
Let $X$ be any data domain. For 
\[n \ge O\left(\frac{\log(1/\eta\beta\delta)}{\eta \eps}\right)\]
there exists an $(\eps, \delta)$-differentially private algorithm $\Hist$ which, with probability at least $1-\beta$, on input $S = (x_1, \dots, x_n)$ outputs a list $L \subseteq X$ and a sequence of estimates $a \in [0, 1]^{|L|}$ such that
\begin{itemize}
\item Every $x$ with $\freq_S(x) \ge \eta$ appears in $L$ and
\item For every $x \in L$, the estimate $a_x$ satisfies $|a_x - \freq_S(x)| \le \eta$.
\end{itemize}
\end{lemma}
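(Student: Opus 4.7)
The plan is to invoke the standard ``Laplace plus threshold'' construction. On input $S=(x_1,\dots,x_n)$, let $U$ be the (multi-)set of distinct elements appearing in $S$. For each $x\in U$, draw independent noise $Z_x\sim \Lap(2/(\eps n))$ and form $\tilde a_x = \freq_S(x)+Z_x$. Output $L=\{x\in U:\tilde a_x\ge \eta/2+\tau\}$ together with the noisy estimates $\tilde a_x$ for $x\in L$, where $\tau$ is a threshold of order $\log(1/\delta)/(\eps n)$ chosen so that the tail mass of the Laplace distribution above $\tau$ is at most $\delta$. Note that the output only ever refers to elements present in $S$, which is crucial for handling the ``new element'' case in the privacy argument.

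For privacy, fix neighboring samples $S,S'$ differing in one coordinate. Each frequency has sensitivity $1/n$, so restricted to elements appearing in \emph{both} $S$ and $S'$, the vector of Laplace-perturbed frequencies and the indicator ``$\tilde a_x\ge \eta/2+\tau$'' are $(\eps,0)$-indistinguishable by the Laplace mechanism (this is where the scale $2/(\eps n)$ comes in, accounting for both the change in the count and the thresholding). The only delicate case is an element $x$ that appears in $S$ but not in $S'$ (or vice versa): its true frequency in the ``present'' dataset is $1/n$, and the threshold $\tau=O(\log(1/\delta)/(\eps n))$ is chosen so that
\[\Pr_{Z\sim\Lap(2/(\eps n))}\!\left[1/n+Z\ge \eta/2+\tau\right]\le \delta,\]
provided $n$ is large enough that $\eta/2-1/n \ge \tau/2$. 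Thus with probability at least $1-\delta$ such an $x$ is suppressed, and conditioned on this event the distributions on the two sides are $(\eps,0)$-indistinguishable. Combining gives $(\eps,\delta)$-DP.

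For utility, there are at most $2/\eta$ elements whose true frequency in $S$ is at least $\eta/2$, so by a union bound and the standard Laplace concentration $\Pr[|Z|>t\cdot 2/(\eps n)]\le e^{-t}$, with probability $\ge 1-\beta$ every such $x$ has $|Z_x|\le O(\log(1/(\eta\beta))/(\eps n))$. Choosing
\[n \ge C\cdot \frac{\log(1/(\eta\beta\delta))}{\eta\eps}\]
for a sufficiently large constant $C$ guarantees that both $\tau$ and the maximum of the relevant $|Z_x|$ are strictly less than $\eta/2$. This simultaneously ensures that (i) every $x$ with $\freq_S(x)\ge\eta$ satisfies $\tilde a_x\ge \eta-\eta/2\ge \eta/2+\tau$ and so appears in $L$, and (ii) every $x\in L$ has $|\tilde a_x-\freq_S(x)|=|Z_x|\le\eta$, which are exactly the two conclusions required.

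The main obstacle is the privacy case where a single individual contributes an element that is otherwise absent from the dataset; this is precisely why we cannot obtain pure DP via this construction, and why the threshold must scale with $\log(1/\delta)/(\eps n)$. Everything else is a routine combination of the Laplace mechanism, tail bounds, and a union bound over the (at most $2/\eta$) ``heavy'' items.
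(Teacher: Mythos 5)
The paper does not prove this lemma---it is imported as a black box from~\cite{KorolovaKMN09, BunNS16}---so there is no in-paper argument to compare against. Your Laplace-plus-threshold construction is indeed the standard one behind those references, and your privacy sketch captures the right ideas (the factor $2$ in the noise scale covers the two counts that change, and the threshold handles the ``newly appearing element'' case at cost $\delta$).

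There is, however, a genuine gap in the utility argument. You take a union bound only over the at most $2/\eta$ elements with $\freq_S(x) \ge \eta/2$ and conclude that ``every $x \in L$ has $|Z_x| \le \eta$.'' But the list $L$ is not contained in that set: a \emph{light} element $x$ with $\freq_S(x)$ as small as $1/n$ can land in $L$ precisely when its noise $Z_x$ is large (roughly $Z_x \ge \eta/2 + \tau - 1/n$), and there may be as many as $n$ such candidate elements. Your union bound is silent about them, so the second bullet of the lemma---accuracy of the estimate for \emph{every} $x \in L$---is not established. Concretely, the bad event ``some light $x$ enters $L$ with $|Z_x| > \eta$'' is not controlled by a union bound over the heavy elements. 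The fix is to union-bound the noise over all at most $n$ distinct elements in $S$ (or, equivalently, to choose the threshold large enough that light elements cross only with probability $\ll \beta/n$). This does go through under the intended parameter regime (e.g.\ $\delta \le \beta/n$, so that $\log(1/\delta) \gtrsim \log n$ and the extra $\log n$ is absorbed into $\log(1/\eta\beta\delta)$), but you need to say it; as written the proof of the second accuracy bullet is missing.

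A smaller slip in the same paragraph: you write $\tilde a_x \ge \eta - \eta/2 \ge \eta/2 + \tau$, but $\eta - \eta/2 = \eta/2 < \eta/2 + \tau$. To ensure heavy elements cross the threshold you need the noise bound to leave a margin of $\tau$, i.e.\ $|Z_x| \le \eta/2 - \tau$ (or just take both $\tau$ and the noise bound to be $\eta/4$). This is a constant-factor adjustment and does not affect the asymptotics, but the inequality as stated is false.
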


Using the Exponential Mechanism of McSherry and Talwar~\cite{McSherryT07}, Kasiviswanathan et al.~\cite{KasiviswanathanLNRS11} described a generic differentially-private learner based on approximate empirical risk minimization.

\begin{lemma}[Generic Private Learner~\cite{KasiviswanathanLNRS11}] \label{lem:generic}
	Let $H \subseteq \{\pm 1\}^X$ be a collection of hypotheses. For
	\[n = O\left(\frac{\log|H|  +\log(1/\beta)}{\alpha \eps}\right)\]
	 there exists an $\eps$-differentially private algorithm $\Gen : (X \times \{\pm 1\})^n \to H$ such that the following holds. Let $\cD$ be a distribution over $(X \times \{\pm 1\})$ such that there exists $h^* \in H$ with
	\[\loss_{\cD}(h^*) \le \alpha.\]
	Then on input $S \sim \cD^n$, algorithm $\Gen$ outputs, with probability at least $1-\beta$, a hypothesis $\hat{h} \in H$ such that
	\[\loss_\cD(\hat{h}) \le 2\alpha.\]
\end{lemma}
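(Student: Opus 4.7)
The plan is to instantiate the Exponential Mechanism of McSherry and Talwar with a quality score based on empirical accuracy, and then combine its utility guarantee with uniform convergence over the finite class $H$. Define, for a sample $S=((x_i,y_i))_{i=1}^n$ and $h\in H$, the quality score
\[q(S,h) = \#\{i\in[n]: h(x_i)=y_i\} = n\bigl(1-\loss_S(h)\bigr).\]
Changing a single example in $S$ alters $q(S,h)$ by at most $1$, so $q$ has sensitivity $1$ simultaneously for every $h$. The mechanism $\Gen$ samples $\hat h\in H$ with probability proportional to $\exp(\eps\, q(S,h)/2)$; by the standard analysis this output distribution is $\eps$-differentially private, which immediately establishes the privacy guarantee of the lemma.

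For utility, I would separate the argument into a generalization step and an Exponential Mechanism accuracy step. First, by Hoeffding's inequality together with a union bound over the $|H|$ hypotheses, for $n = \Omega\bigl((\log|H|+\log(1/\beta))/\alpha^2\bigr)$ one has with probability at least $1-\beta/2$ that $|\loss_S(h)-\loss_\cD(h)|\le \alpha/2$ for every $h\in H$ simultaneously; in particular the assumed good hypothesis $h^*$ satisfies $\loss_S(h^*)\le \alpha + \alpha/2$. Second, the standard utility guarantee for the Exponential Mechanism states that, with probability at least $1-\beta/2$,
\[q(S,\hat h) \ge \max_{h\in H} q(S,h) - \frac{2}{\eps}\ln\frac{|H|}{\beta},\]
which translates into $\loss_S(\hat h)\le \loss_S(h^*) + \frac{2}{n\eps}\ln(|H|/\beta)$.

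Combining the two events via a union bound and rearranging gives
\[\loss_\cD(\hat h) \;\le\; \loss_S(\hat h) + \tfrac{\alpha}{2} \;\le\; \alpha + \tfrac{\alpha}{2} + \tfrac{2}{n\eps}\ln(|H|/\beta) + \tfrac{\alpha}{2}.\]
Choosing $n = O\bigl((\log|H|+\log(1/\beta))/(\alpha\eps)\bigr)$ forces the Exponential-Mechanism term to be at most $\alpha/2$, and the sample-complexity requirement from Hoeffding is dominated by this bound for $\alpha\le 1$ since $1/\alpha \le 1/\alpha^2$ is false in general, but the lemma's stated bound already absorbs the weaker Hoeffding requirement by using $\alpha\eps$ in the denominator; when $\eps\le \alpha$ the private term dominates and when $\eps>\alpha$ one can use a slightly refined multiplicative Chernoff bound to keep only $1/\alpha$ (not $1/\alpha^2$) in the generalization term. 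This yields $\loss_\cD(\hat h)\le 2\alpha$ with probability at least $1-\beta$, matching the lemma.

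The only delicate point is reconciling the $1/\alpha^2$ factor naturally arising from additive Hoeffding with the $1/\alpha$ factor appearing in the stated sample complexity; I would handle this by invoking a relative/multiplicative Chernoff bound on $\loss_S(h^*)$ (which is close to $\alpha$ rather than to $1/2$), so that $O((\log|H|+\log(1/\beta))/(\alpha\eps))$ samples suffice for both the generalization and the Exponential-Mechanism accuracy guarantees. Privacy is immediate from the Exponential Mechanism; the main obstacle, such as it is, is simply making these constants line up to produce the clean bound quoted in the statement.
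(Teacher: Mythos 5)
Your proposal is correct and takes essentially the same route as the paper: Exponential Mechanism with quality score $\propto -\loss_S(h)$, privacy from sensitivity~$1$, and accuracy from a concentration-plus-EM-utility decomposition, with the key step being the switch from additive Hoeffding to a multiplicative Chernoff bound to obtain $1/\alpha$ rather than $1/\alpha^2$ in the sample complexity. The paper simply goes directly to the multiplicative bound (one-sided lower tail for hypotheses with $\loss_\cD>2\alpha$, one-sided upper tail for $h^*$), avoiding the $\eps\le\alpha$ versus $\eps>\alpha$ case split you sketch, but the substance is the same.
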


Our formulation of the guarantees of this algorithm differ slightly from those of~\cite{KasiviswanathanLNRS11}, so we give its standard proof for completeness.

\begin{proof}[Proof of Lemma~\ref{lem:generic}]
	The algorithm $\Gen(S)$ samples a hypothesis $h \in H$ with probability proportional to $\exp(-\eps n \loss_S(h) / 2)$. This algorithm can be seen as an instantiation of the Exponential Mechanism~\cite{McSherryT07}; the fact that changing one sample changes the value of $\loss_S(h)$ by at most $1$ implies that $\Gen$ is $\eps$-differentially private.
	
	We now argue that $\Gen$ is an accurate learner. Let $E$ denote the event that the sample $S$ satisfies the following conditions:
	\begin{enumerate}
		\item For every $h \in H$ such that $\loss_{\cD}(h) > 2\alpha$, it also holds that $\loss_{S}(h) > 5\alpha/3$, and
		\item For the hypothesis $h^* \in H$ satisfying $\loss_{\cD}(h^*) \le \alpha$, it also holds that $\loss_{S}(h^*) \le 4\alpha / 3$.
	\end{enumerate}
	We claim that $\Pr[E] \ge 1-\beta/2$ as long as $n \ge O(\log(|H|/\beta) / \alpha)$. To see this, let $h \in H$ be an arbitrary hypothesis with $\loss_D(h) > 2\alpha$. By a multiplicative Chernoff bound\footnote{I.e., for independent random variables $Z_1, \dots, Z_n$ whose sum $Z$ satisfies $\Expectation[Z] = \mu$, we have for every $\delta \in (0, 1)$ that $\Pr[Z \le (1-\delta)\mu] \le \exp(-\delta^2\mu / 2)$ and $\Pr[Z \ge (1 + \delta)\mu] \le \exp(-\delta^2\mu / 3)$.} we have $\loss_S(h) > 7\alpha / 4$ with probability at least $1 - \beta/(4|H|)$ as long as $n \ge O(\log(|H|/\beta) / \alpha)$. Taking a union bound over all $h \in H$ shows that condition 1. holds with probability at least $1 - \beta/4$. Similarly, a multiplicative Chernoff bound ensures that condition 2 holds with probability at least $1 - \beta/4$, so $E$ holds with probability at least $1-\beta/2$.
	
	Now we show that conditioned on $E$, the algorithm $\Gen(S)$ indeed produces a hypothesis $h$ with $\loss_D(\hat{h}) \le 2\alpha$. This follows the standard analysis of the accuracy guarantees of the Exponential Mechanism. Condition 2 of the definition of event $E$ guarantees that $\loss_S(h^*) \le 4\alpha / 3$. This ensures that the normalization factor in the definition of the Exponential Mechanism is at least $\exp(-2\eps \alpha n /3)$. Hence by a union bound,
	\[\Pr[\loss_S(\hat{h}) > 5\alpha/3] \le |H| \cdot \frac{\exp(-5\eps \alpha n / 6)}{\exp(-2\eps \alpha n / 3)} = |H| e^{-\eps \alpha n / 6}.\]
	Taking $n \ge O(\log(|H|/\beta) / \alpha\eps)$ ensures that this probability is at most $\beta / 2$. Given that $\loss(\hat{h}) \le 5\alpha / 3$, Condition 1 of the definition of event $E$ ensures that $\loss_{\cD}(\hat{h}) \le 2\alpha$. Thus, for $n$ sufficiently large as described, we have overall that $\loss_{\cD}(\hat{h}) \le 2\alpha$ with probability at least $1- \beta$.
\end{proof}

\subsection{Construction of a Private Learner}

We now describe how to combine the Stable Histograms algorithm with the Generic Private Learner to convert any globally-stable learning algorithm into a differentially-private one.

\begin{theorem} \label{thm:selection}
	Let $\cH$ be a concept class over data domain $X$. Let $G : (X \times \{\pm 1\})^m \to \{\pm 1\}^X$ be a randomized algorithm such that, for $\cD$ a realizable distribution and $S \sim \cD^m$, there exists a hypothesis $h$ such that $\Pr[G(S) = h] \ge \eta$ and $\loss_{\cD}(h) \le \alpha / 2$. 
	
	Then for some
	\[n = O\left(\frac{m \cdot \log(1/\eta\beta\delta)}{\eta\eps} + \frac{\log(1/\eta\beta)}{\alpha\eps}\right)\]
	there exists an $(\eps, \delta)$-differentially private algorithm $M: (X \times \{\pm 1\})^n \to \{\pm 1\}^X$ which, given $n$ i.i.d. samples from $\mathcal{D}$, produces a hypothesis $\hat{h}$ such that $\loss_{\cD}(\hat{h}) \le \alpha$ with probability at least $1-\beta$.
\end{theorem}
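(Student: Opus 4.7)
The plan is to realize the three-stage pipeline sketched in Section~\ref{sec:overview}: boost global stability into a differentially-private histogram of candidate hypotheses, and then privately select an accurate candidate from that short list. First I will partition the $n$ input examples into two disjoint pieces $S_1$ and $S_2$ of sizes $n_1 = km$ and $n_2$, with $k$ and $n_2$ to be set at the end. Then I will split $S_1$ into $k$ disjoint blocks of size $m$, run $G$ independently on each block to obtain hypotheses $h_1,\dots,h_k$, and apply Stable Histograms (Lemma~\ref{lem:histograms}) to the multiset $\{h_1,\dots,h_k\}$ with threshold roughly $\eta/4$ and privacy budget $(\eps,\delta)$, post-processing by retaining only entries whose reported frequency is at least $\eta/2$ to obtain a short list $L$. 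The final hypothesis $\hat h$ will be the output of the Generic Private Learner (Lemma~\ref{lem:generic}) run on $S_2$ with hypothesis class $L$, privacy budget $\eps$, and accuracy parameter $\alpha/2$.

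For privacy, since $S_1$ and $S_2$ are disjoint portions of the input, modifying one input example changes at most one block of $S_1$ (and hence at most one element of the multiset fed to Stable Histograms) or alters a single example of $S_2$, but never both. Stable Histograms is by definition $(\eps,\delta)$-DP under single-element swaps in its input multiset, and the Generic Private Learner is $\eps$-DP with respect to $S_2$, so by parallel composition the overall algorithm is $(\eps,\delta)$-DP.

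For accuracy, global stability supplies a hypothesis $h^*$ with $\loss_\cD(h^*)\le \alpha/2$ such that each of the $k$ independent runs of $G$ outputs $h^*$ with probability at least $\eta$. A standard Chernoff bound shows that the empirical frequency of $h^*$ in $\{h_1,\dots,h_k\}$ exceeds $3\eta/4$ except with probability $\beta/3$ once $k = \Omega(\log(1/\beta)/\eta)$, which is dominated by the Stable Histograms sample requirement $k = \Omega(\log(1/\eta\beta\delta)/(\eta\eps))$. Conditioned on this event, Lemma~\ref{lem:histograms} places $h^*$ into the output list with reported frequency at least $\eta/2$ except with probability $\beta/3$, and every surviving entry has true frequency at least $\eta/4$, so $\lvert L\rvert \le 4/\eta$. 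Applying Lemma~\ref{lem:generic} with the class $L$ on $S_2$, where $n_2 = \Omega(\log(\lvert L\rvert/\beta)/(\alpha\eps)) = \Omega(\log(1/\eta\beta)/(\alpha\eps))$, then yields $\loss_\cD(\hat h)\le \alpha$ except with probability $\beta/3$. A union bound handles accuracy, and summing $n_1$ and $n_2$ gives the claimed sample complexity.

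The construction is a straightforward composition of standard DP primitives, so there is no substantial obstacle; the only real subtlety is the two-level choice of Stable Histograms threshold (below $\eta$, followed by post-processing to a stricter cutoff). This ensures simultaneously that $h^*$ survives truncation and that the short list size depends only on $1/\eta$, so that the subsequent exponential-mechanism step pays only $\log(1/\eta)$ rather than $\log\lvert\cH\rvert$ samples, which is essential for the quantitative bound.
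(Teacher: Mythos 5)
Your proposal follows essentially the same approach as the paper: run $G$ on $k$ disjoint size-$m$ batches, feed the resulting hypotheses into Stable Histograms to obtain a short list, then run the Generic Private Learner on a fresh sample restricted to that list. The minor differences---using parallel rather than sequential composition for privacy, and slightly different frequency thresholds ($\eta/4$, $\eta/2$ vs.\ the paper's $\eta/8$, $3\eta/4$)---only affect constants.
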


Theorem~\ref{thm:selection} is realized the learning algorithm $M$ described below. Here, the parameter
\[k = O\left(\frac{\log(1/\eta\beta\delta)}{\eta\eps}\right)\]
is chosen so that Lemma~\ref{lem:histograms} guarantees Algorithm $\Hist$ succeeds with the stated accuracy parameters. The parameter 
\[n' = O\left(\frac{\log(1/\eta\beta)}{\alpha\eps}\right)\]
is chosen so that Lemma~\ref{lem:generic} guarantees that $\Gen$ succeeds on a list $L$ of size $|L| \le 2/\eta$ with the given accuracy and confidence parameters.

\begin{tcolorbox}
	\begin{center}
		{\bf Differentially-Private Learner $M$}\\
	\end{center}
	\begin{enumerate}
		\item Let $S_1, \dots, S_k$ each consist of $m$ i.i.d.\ samples from $\cD$. Run $G$ on each batch of samples producing $h_1 = G(S_1), \dots, h_k = G(S_k)$.
		\item Run the Stable Histogram algorithm $\Hist$ on input $H = (h_1, \dots, h_k)$ using privacy parameters $(\eps/2, \delta)$ and accuracy parameters $(\eta/8, \beta/3)$, producing a list $L$ of frequent hypotheses.
		\item Let $S'$ consist of $n'$ i.i.d. samples from $\cD$. Run $\Gen(S')$ using the collection of hypotheses $L$ with privacy parameter $\eps / 2$ and accuracy parameters $(\alpha / 2, \beta/3)$ to output a hypothesis $\hat{h}$.
	\end{enumerate}
\end{tcolorbox}

\begin{proof}[Proof of Theorem~\ref{thm:selection}]
We first argue that the algorithm $M$ is differentially private. The outcome $L$ of step 2 is generated in a $(\eps/2, \delta)$-differentially-private manner as it inherits its privacy guarantee from $\Hist$. For every fixed choice of the coin tosses of $G$ during the executions $G(S_1), \dots, G(S_k)$, a change to one entry of some $S_i$ changes at most one outcome $h_i \in H$. Differential privacy for step 2 follows by taking expectations over the coin tosses of all the executions of $G$, and for the algorithm as a whole by simple composition.

We now argue that the algorithm is accurate. Using standard generalization arguments, we have that with probability at least $1-\beta/3$,
\[\left|\freq_H(h) - \Pr_{S \sim \cD^m}[G(S) = h]\right| \le \frac{\eta}{8}\]
for every $h \in \{\pm 1\}^X$ as long as $k \ge O(\log(1/\beta)/\eta)$. Let us condition on this event. Then by the accuracy of the algorithm $\Hist$, with probability at least $1-\beta/2$ it produces a list $L$ containing $h^*$ together with a sequence of estimates that are accurate to within additive error $\eta / 8$. In particular, $h^*$ appears in $L$ with an estimate $a_{h^*} \ge \eta - \eta/8 - \eta/8 \ge 3\eta / 4$.

Now remove from $L$ every item $h$ with estimate $a_h < 3\eta / 4$. Since every estimate is accurate to within $\eta / 8$, this leaves a list with $|L| \le 2/\eta$ that contains $h^*$ with $\loss_{\cD}(h^*) \le \alpha$. Hence, with probability at least $1-\beta/3$, step 3 succeeds in identifying $h^*$ with $\loss_{\cD}(h^*) \le \alpha/2$.

The total sample complexity of the algorithm is $k\cdot m + n'$ which matches the asserted bound.
\end{proof}

\section{Wrapping up (Proof of Theorem~\ref{thm:main})}\label{sec:wrapping}

We now combine Theorem~\ref{thm:littlestone-frequent} (finite Littlestone dimension $\implies$ global stability) with Theorem~\ref{thm:selection} (global stability $\implies$ private learnability) to prove Theorem~\ref{thm:main}.

\begin{proof}[Proof of Theorem~\ref{thm:main}]
	Let $\cH$ be a hypothesis class with Littlestone dimension $d$ and let $\cD$ be any realizable distribution. Then Theorem~\ref{thm:littlestone-frequent} guarantees, for $m = O(\correction{2^{2^{d+2}+1}4^{d+1}}{8^d} \cdot d / \alpha)$, the existence of a randomized algorithm $G : (X \times \{\pm 1\})^m \to \{\pm 1\}^X$ and a hypothesis $f$ such that
	\[\Pr[G(S) = f] \ge \frac{1}{(d+1)2^{\correction{2^{d+2}}{d}+1}} \text{ and } \loss_{\cD}(f) \le \alpha / 2.\]
	Taking $\eta = 1/(d+1)2^{\correction{2^{d+2}}{d}+1}$, Theorem~\ref{thm:selection} gives an $(\eps, \delta)$-differentially private learner with sample complexity
	\[n = O\left(\frac{m \cdot \log(1/\eta\beta\delta)}{\eta\eps} + \frac{\log(1/\eta\beta)}{\alpha\eps}\right) = \correction{O\left(\frac{2^{\tilde{O}(2^d)}+\log 1/\beta\delta}{\alpha\epsilon}\right)}{O\left(\frac{16^d \cdot d^2 \cdot d + \log(1/\beta\delta))}{\alpha\eps}\right)}.\]
\end{proof}

\section{Conclusion}\label{sec:conc}
We conclude this paper with a few suggestions for future work.
\begin{enumerate}
\item {\bf Sharper Quantitative Bounds.} Our upper bound on the differentially-private sample complexity of a class $\cH$ has \correction{a double exponential~}{an exponential} dependence on the Littlestone dimension $\Ldim(\cH)$,
while the lower bound by~\cite{AlonLMM19} depends on $\log^*(\Ldim(\cH))$.
The work by~\cite{kaplan2019privately} shows that for some classes the lower bound is nearly tight (up to a polynomial factor). 
It would be interesting to determine whether the upper bound can be improved in general. 
\correction{
In a followup work to this paper, \cite{Ghazi20efficient} improved the upper bound to $\poly(\Ldim(\cH))$.
However the tower-of-exponents gap between the upper bound and the lower bound remains essentially the same (with 2 levels less).
We thus pose the following question: 
\begin{center}
{\it Can every class $\cH$ be privately learned with sample complexity $\poly(\mathrm{VC}(\cH),\log^\star(\Ldim(\cH)))$?}
\end{center}}{}

\item {\bf Characterizing Private Query Release.} Another fundamental problem in differentially-private data analysis is the query release, or equivalently, data sanitization problem: Given a class $\cH$ and a sensitive dataset $S$, output a synthetic dataset $\hat{S}$ such that $h(S) \approx h(\hat{S})$ for every $h \in \cH$. Does finite Littlestone dimension characterize when this task is possible? Such a statement would follow if one could make our private learner for Littlestone classes \emph{proper}~\cite{bousquet2019passing}.
\item {\bf Oracle-Efficient Learning.} Neel, Roth, and Wu~\cite{NeelRW19} recently began a systematic study of oracle-efficient learning algorithms: Differentially-private algorithms which are computationally efficient when given oracle access to their non-private counterparts. The main open question left by their work is whether \emph{every} privately learnable concept class can be learned in an oracle-efficient manner. Our characterization shows that this is possible if and only if Littlestone classes admit oracle-efficient learners.
\item {\bf General Loss Functions.} It is natural to explore whether the equivalence between online and private learning 
extends beyond binary classification (which corresponds to the 0-1 loss) to regression and other real-valued losses.
\item {\bf Global Stability.} It would be interesting to perform a thorough investigation of global stability and to explore potential connections to other forms of stability in learning theory, including uniform hypothesis stability~\cite{Bousquet02stability},
PAC-Bayes~\cite{McAllester99PACB}, local statistical stability~\cite{Ligett19adaptive}, and others.
\item {\bf Differentially-Private Boosting.} Can the type of private boosting presented in Section~\ref{sec:boosting} be done algorithmically, and ideally, efficiently?
\end{enumerate}

\section*{Acknowledgements}
We would like to thank Amos Beimel and Uri Stemmer for pointing out, and helping to fix, a mistake in the derivation of \cref{thm:agnostic} in a previous version.
We also thank Yuval Dagan for providing useful comments and for insightful conversations.
\bibliographystyle{alpha}
\bibliography{biblio}

\end{document}